\title{Provably Extracting the Features from a General Superposition}
\author{Allen Liu \footnote{This work was partially supported by a Miller Research Fellowship while at UC Berkeley}  \\
NYU Courant \\
\texttt{liu.a@nyu.edu}}
\date{}
\begin{document}

\maketitle

\begin{abstract}
It is widely believed that complex machine learning models generally encode features through linear representations. This is the foundational hypothesis behind a vast body of work on interpretability.  A key challenge toward extracting interpretable features, however, is that they exist in superposition. In this work, we study the question of extracting features in superposition from a learning theoretic perspective. We start with the following fundamental setting: we are given query access to a function 
\[
f(x)=\sum_{i=1}^n \sigma_i(v_i^\top x),
\]
where each unit vector $v_i$ encodes a feature direction and $\sigma_i:\R\to\R$ is an arbitrary response function and our goal is to recover the $v_i$ and the function $f$.

In learning-theoretic terms, superposition refers to the \emph{overcomplete regime}, when the number of features is larger than the underlying dimension (i.e. $n > d$), which has proven especially challenging for typical algorithmic approaches. Our main result is an efficient query algorithm that, from noisy oracle access to $f$, identifies all feature directions whose responses are non-degenerate and reconstructs the function $f$.  Crucially, our algorithm works in a significantly more general setting than all related prior results. We allow for essentially arbitrary superpositions, only requiring that $v_i, v_j$ are not nearly identical for $i \neq j$, and allowing for general response functions $\sigma_i$. At a high level, our algorithm introduces an approach for searching in Fourier space by iteratively refining the search space to locate the hidden directions $v_i$.
\end{abstract}

\newpage

\section{Introduction}

While modern machine learning models are incredibly complex, a foundational viewpoint in our quest to understand them is the notion of \emph{linear representations} \cite{alain2016understanding,elhage2022toy, park2023linear, olah_linear_representation_2024,golowich2025sequenceslogitsreveallow}. The hypothesis is that salient features correspond to directions $v\in\R^d$ in some representation space, and the response to a feature depends only on the one–dimensional projection $v^\top x$ via some \emph{ridge function} say $f(x) = \sigma(v^\top x)$. This hypothesis has served as the basis for a vast body of work on extracting features, activation steering, and interpretability more broadly \cite{kim2018interpretability, turner2023steering, olah_linear_representation_2024}. 

From a theoretical perspective, even the study of simple, single-feature functions, that depend only on the projection of the input onto a single direction, has led to a rich body of work in computational learning theory through generalized linear models \cite{mccullagh2019generalized, kakade2011efficient, chen2020classification}, single-index models \cite{ichimura1993semiparametric, dudeja2018learning, bietti2022learning, gollakota2023agnostically, damian2023smoothing, damian2024computational, zarifis2024robustly}, and also related problems such as non-Gaussian component analysis \cite{blanchard2006search, diakonikolas2022non}.  However, further adding to the challenge is the fact that for most models or functions that we want to study, the output depends on many features of the input. We formalize this in the most basic setting, when the output is a sum of the responses to individual features i.e. a \emph{sum of ridge functions}
\[
f(x)\;=\;\sum_{i=1}^n \sigma_i(v_i^\top x),
\]
where each $v_i\in\mathbb R^d$ is a unit ``feature direction'' and $\sigma_i:\mathbb R\to\mathbb R$ is an arbitrary univariate response. 

A key concept in feature learning and interpretability is the notion of \emph{superposition}, formalized in \cite{elhage2022toy}. Superposition refers to the fact that the number of features is generally much larger than the dimension of the representation, and has been highlighted as an important obstacle and potentially fundamental barrier to extracting interpretable features  \cite{elhage2022toy,olah_linear_representation_2024, cunningham2023sparse}. As stated in \cite{elhage2022solu}, ``superposition means there is \emph{no basis} in which activations are interpretable". This means that we cannot obtain an ideal decomposition into features that are disentangled and correspond to human-interpretable concepts. 

In practice, sparse autoencoders and related dictionary-learning methods have been used to recover interpretable features from representations in superposition in some cases \cite{bricken2023monosemanticity,cunningham2023sparse}. However, these empirical methods are optimization heuristics and can exhibit instability, incompleteness, and overfitting \cite{olah_linear_representation_2024}, while theoretical guarantees apply only under strong structural assumptions on the underlying features. This motivates us to ask the following question from a learning-theoretic perspective:

\begin{gquestion}
Can we study superposition and the obstacles it poses from the perspective of computational learning theory? Can we formalize the challenges that arise in simple settings and develop algorithms to address them?   
\end{gquestion}

In learning-theoretic terms, superposition corresponds to the \emph{overcomplete} regime where the number of features $n$ exceeds the ambient dimension $d$. Yet, this overcomplete regime has also proven particularly challenging from an algorithmic perspective, posing a barrier for common approaches such as moment and tensor methods \cite{hillar2013most, anandkumar2013overcomplete, anandkumar2014tensor}.

To formalize the problem setup, it remains to specify how we can access the function $f$. From the perspective of extracting features from a trained model, the natural question is to recover the features given the ability to query the function. In fact, this is the typical setting for interpretability analyses, where one takes a trained model and then analyzes its activations and tries to understand the learned representations by plugging in carefully chosen inputs.  Thus, we study the following query learning setting: we assume black-box \emph{query access} to some function $f_{\sim}$ with $\|f_{\sim} - f\|_\infty\le \varepsilon$.  We now ask:

\begin{aquestion}
Given queries to a function that is (close to) a sum of features, can we learn the underlying features?  Are there barriers to learning if the features exhibit superposition?  
\end{aquestion}

Beyond being natural from the perspective of feature learning and interpretability, this query learning model also allows us to sidestep the difficulty of positing realistic distributional assumptions. Typical learning setups often assume $x$ is drawn from some distribution, such as a Gaussian, and then ask to recover $f$ given polynomially many samples $(x, f(x))$.  However, it is difficult to posit tractable but realistic distributional assumptions \----  common assumptions like $x$ being Gaussian are unrealistic \---- and furthermore, even then there are still computational barriers in simple settings with just a single feature \cite{goel2020statistical, song2021cryptographic, damian2024computational}. The query model also captures natural settings for model distillation and stealing, where a learner tries to recover something about a trained machine learning model through black-box API access \cite{tramer2016stealing, finlayson2024logits, carlini2024stealing}.

In light of the discussion above, the main question we pose is important both for understanding how to extract interpretable features \footnote{Also see e.g. \cite{agarwal} which proposes Neural Additive Models (NAMs), a special case of our model, as an architecture for interpretable deep learning} and from a fundamental computational learning theory perspective, where it builds on a vast existing body of work on learning single hidden layer neural networks. Within this literature there are two main points of comparison.  First, there are numerous results in the standard ``passive" setting, but the bottom line is that here, there is substantial evidence of computational hardness.  There are statistical query lower bounds even for settings that are much simpler, special cases of our setting such as learning a single ridge function with general activation \cite{goel2020statistical} and learning a linear combination of many ridge functions with fixed e.g. ReLU activation \cite{diakonikolas2020algorithms, goel2020superpolynomial}, suggesting that a general learning guarantee is out of reach for efficient algorithms. Second, in query learning models, there has been work specifically in the case of ReLU neural networks i.e. $\sigma_i(z) = \text{ReLU}(z)$ for all $i \in [n]$, but these algorithms are extremely tailored to specifically the ReLU activation function, relying on trying to find the kink in the ReLU \cite{rolnick2020reverse, chen2021efficientlylearninghiddenlayer, daniely2021exact}. From the lens of feature learning, we expect that response functions can be far more general than just ReLU, and thus previous work falls significantly short of a clean resolution.

\paragraph{Main Result} Our main result is an algorithm for learning a general sum of features under mild and information-theoretically necessary assumptions \---- nontrivial separation between the feature directions $v_i$ and Lipschitzness of the functions $\sigma_i(\cdot)$ (see \Cref{assume:nondegen}). Our algorithm has query complexity and runtime polynomial in all relevant parameters and achieves the following guarantees: 
\begin{enumerate}
  \item \emph{Identifies} all nonlinear feature directions $\{v_i\}$ up to sign and $\eps$-error
  \item \emph{Reconstructs} the associated univariate responses on $[-R,R]$, yielding a uniform $\eps$-approximation to $f$ on the domain $\|x\|\le R$.
\end{enumerate}

In other words, if some trained model is close to a linear combination of features in some known representation space, then we can efficiently recover the model from queries. The formal statements are in \Cref{thm:main-learning} and \Cref{coro:identifiability}. Note that if some of the $\sigma_i(\cdot)$ were linear, then these directions would not be identifiable for trivial reasons.

We emphasize that our result holds for overcomplete $v_i$, even when some of the $v_i$ are highly correlated and for general response functions $\sigma_i$.  This circumvents the aforementioned computational hardness in the passive setting and significantly generalizes previous learning results which either require restrictive assumptions on the $\sigma_i$ (e.g. ReLU) \cite{chen2021efficientlylearninghiddenlayer, daniely2021exact} or the $v_i$ (e.g. linear independence or near-orthogonality) \cite{sedghi2016provable, oko2024learning}.

\paragraph{Note on Identifiability} We also highlight the importance of the identifiability aspect of our result from the lens of feature learning and interpretability.  As formulated in \cite{elhage2022toy}, one of the fundamental challenges in extracting features from modern models, and interpretability more broadly, is \emph{superposition} \---- when the number of features is larger than the ambient dimension, it makes the features nonidentifiable.  Indeed, this would be the case if the response functions were linear.  However, from this viewpoint, our results actually give a new reason for optimism. A key takeaway from our results is:
\begin{takeaway} For a function that is a sum of many features, the individual features are identifiable, even in a superposition, as long as the response functions are nonlinear.    
\end{takeaway}

\paragraph{High Level Approach} Key to our result is a different type of algorithmic approach that departs from the typical method-of-moments recipe that is ubiquitous for learning latent variable models, especially over Euclidean spaces.  The starting point is the basic observation that the Fourier transform $\wh{f}$ of $f$ is ``sparse''. Ignoring integrability issues for now, for a single ridge function $f(x)=\sigma(v^\top x)$, the Fourier transform is nonzero only on the line $\{tv \}_{t \in \R}$. Thus when $f$ is a sum of ridge functions, its Fourier transform is supported on only the lines through the origin in the directions $v_i$. This observation was made in \cite{eldan2016power} and has been used for proving negative results and depth separations about the expressibility of shallow neural networks.  However, here we will use this structural property algorithmically for a positive result.

At a high level, to recover the directions, we design an algorithm that can ``locate the mass" in Fourier space.   This paradigm has already been successful in boolean \cite{goldreich1989hard, kushilevitz1991learning} and discrete \cite{hassanieh2012nearly} settings. Our algorithm draws inspiration from this approach \---- the main subroutine (Algorithm~\ref{alg:search}) involves an iterative algorithm for searching in Fourier space to locate the hidden directions.  However, compared to discrete spaces, the continuity and unboundedness of Euclidean space pose technical challenges \----  we emphasize the conceptual novelty here, that through carefully chosen geometric constructions and filter functions, we develop an algorithm that efficiently searches over a high-dimensional Euclidean Fourier domain. We provide a more detailed overview of our techniques in \Cref{sec:tech-overview}.

\subsection{Related Work}

\paragraph{GLMs and single and multi‑index models.} There is a substantial body of work on GLM/SIM estimators, which recover a function that depends only on the projection of the input onto one direction, under various distributional and noise assumptions \cite{mccullagh2019generalized,ichimura1993semiparametric, kakade2011efficient, dudeja2018learning,bietti2022learning,damian2023smoothing,gollakota2023agnostically,zarifis2024robustly,damian2024computational}.  Multi-index models (MIMs) generalize SIMs to functions that depend only on the projection of the input onto a constant dimensional subspace. Recently there has also been a growing body of work on learning MIMs \cite{bietti2023learning, diakonikolas2024agnostically, mousavi2024learning}.  We refer the reader to \cite{bruna2025survey} for a more detailed overview of this literature.  Note that our setting, involving a linear combination of ridge functions, is very different because the function actually depends on the full $d$-dimensional space.

\paragraph{Shallow Neural Networks and Generalizations.} In our setting, when all of the $\sigma_i$ are some fixed activation function such as sigmoid or ReLU, then $f$ can be viewed as a two-layer neural network.  There is a large body of work on learning two-layer neural networks from samples, which aims to characterize the boundaries of efficient algorithms and computational hardness e.g. \cite{awasthi2021efficient, diakonikolas2020algorithms, ge2018learning, goel2020superpolynomial, chen24faster}. Beyond understanding the computational landscape, there is also a line of work towards understanding specifically the dynamics of gradient descent for training shallow networks e.g. \cite{janzamin2016beatingperilsnonconvexityguaranteed, ge2017learningonehiddenlayerneuralnetworks, zhong2017recoveryguaranteesonehiddenlayerneural}. However in the passive setting, there are exponential in $\min(d,n)$ SQ lower bounds, suggesting that a general, efficient (i.e. $\poly(d,n)$) time algorithm is not possible.

There has also been work on learning neural networks with query access \cite{rolnick2020reverse, chen2021efficientlylearninghiddenlayer, daniely2021exact}. However, as mentioned earlier, these results are extremely tailored to ReLU activations, whereas our result allows for arbitrary activations $\sigma_i$.

Linear combinations of more general activation functions are considered in \cite{sedghi2016provable,oko2024learning}. These works also study a passive learning setting, and thus require structural assumptions on the $v_i$, namely that they are linearly independent in the former, and nearly orthogonal in the latter, whereas we do not need any such conditions.

\paragraph{Fourier search in discrete spaces.} Our algorithm of searching in a high-dimensional Fourier space draws inspiration from the Goldreich–Levin heavy‑Fourier‑coefficient search algorithm \cite{goldreich1989hard,kushilevitz1991learning} and also bears resemblance to algorithms used for computing sparse Fourier transforms \cite{hassanieh2012nearly}.  However, a key difference is that we give an efficient algorithm for searching over a high-dimensional Euclidean space. We remark that the connection between Fourier sparsity and expressing functions as a sum of ridge functions dates back to the classical works \cite{barron, candes, donoho} but our focus here is on making this connection algorithmic. 

\paragraph{Query learning and model stealing.} With the rise of APIs that provide users with black-box access to trained machine learning models, there has been increased interest in understanding what we can extract from such an API, both from a theoretical \cite{mahajan2023learning, liu2025model, golowich2025sequenceslogitsreveallow,golowich2025provablylearningmodernlanguage} and practical perspective \cite{tramer2016stealing, finlayson2024logits, carlini2024stealing}. Our result says that if a model is close to a sum of features in some representation space that we know, then we can recover the model from queries.

\section{Preliminaries}

We begin with some basic notation and facts that will be used throughout the paper.  Throughout, we use the convention $\ii = \sqrt{-1}$.

\subsection{Ridge Functions (Features)}

A ridge function is a function that depends only on the projection of the input onto a single direction.  

\begin{definition}[Ridge Function (Feature)]\label{def:ridge}
A function $f: \R^d \rightarrow \R$ is a ridge function if it can be written as $f(x) = \sigma(v^\top x)$ for some unit vector $v \in \R^d$ and univariate function $\sigma$. 
\end{definition}
\begin{remark}
Whenever we write a ridge function in the above form, we will assume that $\norm{v} = 1$ (we can always ensure this by just rescaling $\sigma$).
\end{remark}

Given a ridge function, we can think of $v$ as encoding a feature direction, and the function $\sigma$ as an activation that controls how strongly the output responds to the strength of the feature. More generally, we can consider a sum or linear combination of features, each involving a different direction $v_i$ and possibly different activation $\sigma_i$.

\begin{definition}[Sum of Features]\label{def:lin-comb}
A function $f: \R^d \rightarrow \R$ is a sum of $n$ features if it can be written as a linear combination of $n$ ridge functions i.e.
\[
f(x) =  \sigma_1(v_1^\top x) + \dots + \sigma_n(v_n^\top x) \,.
\]
\end{definition}
Note that the terms sum and linear combination can be used interchangeably since we can always absorb the coefficients of a linear combination into the functions $\sigma_i$ themselves.

\subsection{Learning Setup}\label{sec:setup}

The main goal in this paper will be to recover an unknown sum of features from queries.  We now describe the details of the problem setup.  There is an unknown function $f:\R^d \rightarrow \R$ with
\[
f(x) = \sigma_1(v_1^\top x) + \dots + \sigma_n(v_n^\top x) \,.
\]
We receive query access to a function $f_{\sim}$ that satisfies $\norm{f - f_{\sim}}_{\infty} \leq \eps$ and our goal will be to recover a description of a function that is close to $f$.  We will be mostly interested in the overcomplete regime where $n \geq d$ \---- this regime in particular has proven challenging for designing efficient algorithms \footnote{When $n < d$, we can first learn the subspace spanned by the $v_i$ via a standard tensor method and then we can project onto this subspace to reduce to the case where $n \geq d$.}.
\\\\
We will make a few assumptions on the $v_i, \sigma_i$ to rule out degenerate or pathological examples.
\begin{assumption}\label{assume:nondegen}
We make the following assumptions on $f$:
\begin{itemize}
\item The functions $\sigma_i$ all satisfy $\sigma_i(0) = 0$
\item The functions $\sigma_i$ are all $L$-Lipschitz
\item The sine of the angle between $v_i$ and $v_j$ is at least $\gamma$ whenever $i \neq j$  
\end{itemize}
\end{assumption}
\begin{remark}
The first assumption is just for the sake of normalization, and the second one is without loss of generality since we can always just query $f_{\sim}(0)$ and subtract it off. The Lipschitzness of the activations $\sigma_i$ is necessary since otherwise, even for a single function in one dimension, it would be impossible to approximate it from queries.  While the final assumption may not be strictly necessary for just recovering the function, it is necessary for being able to recover the individual directions $v_i$ as it rules out degenerate examples where features are (almost) identical and cancel each other out.      
\end{remark}

We will also mention the following assumption, that the $\sigma_i$ are bounded.  While this assumption is not necessary, it simplifies the exposition.  We will first give a learning algorithm under this assumption, and then show in \Cref{sec:remove-bounded} (see \Cref{thm:main-learning}) how to remove this secondary assumption.  

\begin{assumption}\label{assume:bounded}
Assume that the functions $\sigma_i$ satisfy $|\sigma_i(x)| \leq 1$ for all $x \in \R, i \in [n]$.
\end{assumption}

The majority of the paper will be devoted to proving the following theorem:
\begin{theorem}\label{thm:weaker}
Under \Cref{assume:nondegen} and \Cref{assume:bounded}, for any target accuracy $\eps' $, target domain $R$, and failure probability $\delta$, there is some $N = \poly(d, L, R, n,  1/\gamma, 1/\eps', \log 1/\delta)$ such that if $\eps < 1/N$, there is an algorithm that makes $\poly(N)$ queries and runs in $\poly(N)$ time and outputs a sum of features 
\[
\wt{f}(x) =  \sigma_1'({v_1'}^\top x) + \dots +  \sigma_n'({v_n'}^\top x)
\]
such that with probability  $1 - \delta$, for all $x$ with $\norm{x} \leq R$, $|f(x) - \wt{f}(x)| \leq  \eps'$.
\end{theorem}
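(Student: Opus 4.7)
I would decompose the task into two phases: first, recover approximate feature directions $\tilde v_1, \dots, \tilde v_k$ by searching for the Fourier mass of $f$ in high-dimensional frequency space; second, given these directions, fit the univariate responses $\sigma_i$ on $[-R, R]$ via a least-squares regression. The introduction already signals this structure, and the core technical challenge lies in Phase~1.

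\textbf{Phase 1: Fourier-space search.} Since the $\sigma_i$ need not be integrable, I would work with the Gaussian-smoothed transform
\[
F(w,\tau) \defeq \E_{x\sim N(0,\tau^2 I)}\!\bigl[f(x)\,e^{-\ii w^\top x}\bigr],
\]
which can be estimated to additive error $\eta$ from $\poly(n,1/\eta,\log(1/\delta))$ queries to $f_\sim$ by Hoeffding (since \Cref{assume:bounded} gives $|f|\le n$). A direct calculation for a single ridge function $\sigma(v^\top x)$ factorizes the integral into a parallel part, the Gaussian-windowed 1D Fourier transform $\wh{\sigma}_\tau(w^\top v)$, times a perpendicular factor $e^{-\tau^2\|w_\perp\|^2/2}$ where $w_\perp = w - (w^\top v)v$. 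Hence $F(Ru,\tau)$ decays super-polynomially whenever a candidate $u\in\S^{d-1}$ is more than $\sim 1/(R\tau)$ in angle away from every feature direction. The search itself is an iterative subdivision: maintain a set of spherical caps on $\S^{d-1}$, probe each by estimating $F(Ru,\tau)$ at its center across a grid of frequencies $R$, keep caps whose estimated mass exceeds a threshold calibrated to the nondegeneracy of the $\sigma_i$, and recursively subdivide. At each level I double $R\tau$, halving the angular resolution, until surviving caps have radius $\eps$.

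\textbf{Phase 2: Fitting univariate responses.} Given candidate directions $\tilde v_1,\dots,\tilde v_k$ each within $\eps$ of some element of $\{\pm v_i\}$ and covering every nondegenerate feature, I would fix a basis of piecewise-linear hat functions $\phi_1,\dots,\phi_m$ on a uniform grid of $[-R,R]$ with spacing $\eps'/L$, and solve
\[
\min_{c_{i,k}} \sum_{j=1}^{M} \biggl( f_\sim(x_j) - \sum_i \sum_k c_{i,k}\,\phi_k(\tilde v_i^\top x_j)\biggr)^{\!2}
\]
using $M=\poly(N)$ queries $x_j$ drawn uniformly from $\|x\|\le R$. Lipschitzness of $\sigma_i$ allows the hat-function basis to approximate each true $\sigma_i$ within $O(\eps'/n)$, while the angular separation $\gamma$ between distinct $v_i$ yields a well-conditioned design, so the resulting $\wt f(x) = \sum_i \sum_k c_{i,k}\,\phi_k(\tilde v_i^\top x)$ is within $\eps'$ of $f$ uniformly on $\|x\|\le R$.

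\textbf{Main obstacle.} The crux is the analysis of the Fourier search. I expect the hard parts to be: (i) showing that for every nondegenerate feature $v_i$ there exists a scale $R$ and a direction $u$ close to $v_i$ at which $|F(Ru,\tau)|$ exceeds the detection threshold by a definite margin, which requires quantitatively combining the Lipschitz and nondegeneracy assumptions to lower-bound $|\wh{\sigma}_{i,\tau}|$ at some frequency; (ii) preventing destructive cancellation between features whose directions $v_i,v_j$ happen to land in the same cap during refinement, which is precisely where the separation assumption $\sin\angle(v_i,v_j)\ge\gamma$ must be leveraged (e.g., by exhibiting a scale $R$ at which the perpendicular Gaussian factors decouple the two contributions); and (iii) bounding the number of surviving caps at every refinement level by $\poly(n)$ so that both query complexity and runtime stay polynomial. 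Once these invariants are in place, Phase~2 is a routine error analysis of linear regression with an approximate basis.
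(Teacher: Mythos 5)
There is a genuine gap, and it is concentrated in your Phase 1 search. Subdividing spherical caps on $\S^{d-1}$ is not affordable in high dimension: a covering of the sphere at any fixed angular resolution has $2^{\Theta(d)}$ caps, and each surviving cap has $2^{\Theta(d)}$ children, all of which must be probed before you can decide which survive. So even if you could certify that only $\poly(n)$ caps survive each level, the query count per level is exponential in $d$, and the theorem demands $\poly(d,n,\dots)$ complexity. This is exactly the obstruction the paper's coordinate-by-coordinate search (\Cref{alg:search}) is designed to avoid: it refines one coordinate of the frequency vector at a time along a \emph{random} orthonormal basis, so each level is a one-dimensional grid search, brute force is only over the first two coordinates, and the branching thereafter is at most one per prefix thanks to the separating-basis property (\Cref{claim:anticoncentration}, \Cref{claim:bounded-recursion}). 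A second problem is your retention test itself: you probe the complex value $F(Ru,\tau)$ at a single direction (the cap center) with tube width matched to the cap radius. At angular scales coarser than $\gamma$, several features lie inside the same probe tube, and since the $\sigma_i$ are adversarial their contributions along that ray can destructively cancel at all probed frequencies, so a cap containing a nondegenerate feature can be pruned; the $\gamma$-separation you invoke in obstacle (ii) only decouples features once the resolution is already finer than $\gamma$. The paper's non-cancellation guarantee (\Cref{lem:where-nonzero}) is established only at the fine scale, for the nonnegative quadratic slab mass $I^*_{f^{(\ell)}}(v,A)$ (which integrates $|\wh{f^{(\ell)}}|^2$ against a Gaussian and is estimable from two-point correlations), using $|z_1+\cdots+z_n|^2 \ge \tfrac1n|z_i|^2-\sum_{j\ne i}|z_j|^2$ together with the fact that the probe point $\beta v_i$ sits at distance $\gtrsim a\gamma$ from the other tubes; your hierarchical scheme needs a retention guarantee at every coarse level, which this style of argument does not give.

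Phase 2 also has a flawed step: the claim that $\gamma$-separation makes the hat-function design well-conditioned is false in the overcomplete regime. Ridge functions in pairwise well-separated directions can be exactly linearly dependent \---- the linear parts always are, and so are quadratic parts, e.g.\ $\sum_i (v_i^\top x)^2 = c\|x\|^2$ for any tight frame \---- so the least-squares system is rank-deficient and the fitted coefficients are uncontrolled without regularization. Approximating $f$ (rather than identifying each $\sigma_i$) may still be possible this way, but you would need a coefficient-norm bound, a uniform-convergence argument over the ball, and an $L^2$-to-$L^\infty$ conversion to reach the sup-norm guarantee of the theorem; none of that is routine as stated. The paper instead recovers each $a_i\sigma_i$ separately in Fourier space: along a recovered direction $u\approx v_i$, the cross-feature contributions to $\wh{f^{(\ell)}}(tu)$ are exponentially small for $t$ bounded away from zero (\Cref{claim:fourier-point-closeness}), and $\sigma_i$ is rebuilt by truncated Fourier inversion (\Cref{claim:truncated-inversion}), which sidesteps conditioning entirely and also handles the near-linear/degenerate features cleanly in the final assembly of \Cref{thm:weaker}.
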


\begin{remark}
Note that with finitely many queries, it is only possible to guarantee closeness over a bounded domain, as opposed to everywhere, even for a single function in one dimension, and thus the restriction to $\norm{x} \leq R$ in the above theorem is necessary. 
\end{remark}

\Cref{thm:weaker} in fact guarantees not just recovering the function, but also recovering all of the hidden directions $v_i$ for which the function $\sigma_i$ is nontrivial (see \Cref{lem:find-directions} for a formal statement).  If some $\sigma_i$ were constant, then of course, recovering the associated direction is impossible.

\subsection{Main Results}

The final theorem and corresponding identifiability result are stated below. After completing the proof of \Cref{thm:weaker}, we complete the proof of the stronger version through a direct reduction in \Cref{sec:remove-bounded}.

\begin{theorem}\label{thm:main-learning}
For any target accuracy $\eps' $, target domain $R$, and failure probability $\delta$, there is some $N = \poly(d, L, R, n,  1/\gamma, 1/\eps', \log 1/\delta)$ such that if $\eps < 1/N$, then under \Cref{assume:nondegen}, there is an algorithm that makes $\poly(N)$ queries and $\poly(N)$ runtime and outputs a sum of features 
\[
\wt{f}(x) =  \sigma_1'({v_1'}^\top x) + \dots +  \sigma_n'({v_n'}^\top x)
\]
such that with probability  $1 - \delta$, for all $x$ with $\norm{x} \leq R$, $|f(x) - \wt{f}(x)| \leq  \eps'$.
\end{theorem}

\begin{corollary}\label{coro:identifiability}
In the same setting as \Cref{thm:main-learning}, the algorithm can also guarantee to return directions $v_1', \dots , v_s'$ for some $s \leq n$ such that  
\begin{itemize}
\item Each returned direction $v_j'$ satisfies $\min(\norm{v_j' - v_i}, \norm{v_j' + v_i}) \leq \eps'$ for some $i \in [n]$
\item For each $i \in [n]$ such that the function $\sigma_i(\cdot )$ has $\max_{z \in [-R,R]}|\sigma_i(z) - (az + b)| \geq \eps'$ for any linear function $az + b$, there must be some returned direction $v_j'$ with $\min(\norm{v_j' - v_i}, \norm{v_j' + v_i}) \leq \eps'$
\end{itemize}
\end{corollary}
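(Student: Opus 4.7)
The plan is to piggyback on the reduction already used to prove \Cref{thm:main-learning}. That reduction convolves $f$ with a small Gaussian to produce a smooth, bounded surrogate $\bar f(x) = \sum_i a_i \bar\sigma_i(v_i^\top x)$ and then takes a directional derivative
\[
\bar f_b(x) \;=\; \partial_b \bar f(x) \;=\; \sum_{i=1}^n a_i (v_i \cdot b)\, \bar\sigma_i'(v_i^\top x),
\]
which is again a sum of features with the \emph{same} hidden directions $v_i$ but with activations $\tilde\sigma_i(z) \defeq a_i (v_i \cdot b)\bar\sigma_i'(z)$ that are now bounded (since $\sigma_i$ is $L$-Lipschitz), so that \Cref{thm:weaker}---and in particular its internal use of \Cref{lem:find-directions}---applies to $\bar f_b$. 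The first bullet of \Cref{lem:find-directions} already delivers the first condition of the corollary verbatim: every returned direction is within the target accuracy of some $\pm v_i$. All the work, therefore, is in verifying the second condition, i.e., that every $\sigma_i$ which is far from linear on $[-R,R]$ shows up among the returned directions.

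The key claim is: if $\max_{z\in[-R,R]}|\sigma_i(z)-(\alpha z+\beta)|\ge \eps'$ for all linear $\alpha z+\beta$, then $\bar\sigma_i'$ must vary by at least $\Omega(\eps'/R)$ over $[-R,R]$, and consequently $\tilde\sigma_i$ is $(R,\Omega(\eps'/(R\sqrt d)))$-nondegenerate once $b$ is drawn uniformly on the unit sphere (so that $|v_i\cdot b|\gtrsim 1/\sqrt d$ for all $i$ simultaneously by a union bound). I would prove the variation claim by contraposition: if $\bar\sigma_i'$ varied by less than $\delta$ on $[-R,R]$, then setting $\alpha = \bar\sigma_i'(0)$ and $\beta = \bar\sigma_i(0)$, the fundamental theorem of calculus gives $|\bar\sigma_i(z) - \alpha z - \beta|\le R\delta$ for $|z|\le R$; choosing the smoothing bandwidth $\eta$ small enough that $\|\bar\sigma_i-\sigma_i\|_\infty \ll \eps'$ on $[-R,R]$ (possible by Lipschitzness of $\sigma_i$), we would contradict the far-from-linear hypothesis whenever $\delta < \eps'/(2R)$. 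Feeding this non-degeneracy into \Cref{lem:find-directions} with target accuracy $\Delta \le \Omega(\eps'/(R\sqrt d))$ then returns a direction $v_j'$ within $\Delta \le \eps'$ of $\pm v_i$, which is exactly the second condition.

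The main obstacle will be coordinating the parameter chain. The derivative $\bar\sigma_i'$ inherits a Lipschitz constant scaling like $L/\eta$, and the smoothing bandwidth $\eta$ must simultaneously be small enough to preserve the far-from-linear property of each $\sigma_i$ and large enough to keep the effective Lipschitz and boundedness constants fed into \Cref{lem:find-directions} polynomial in the problem parameters. This is the same kind of staged choice used in \Cref{sec:weaker-proof} (where $\ell_2 \ll 1/\Delta \ll \ell_1$), and once $\eta$ and $\Delta$ are set as inverse polynomials in $1/\eps'$, $d$, $n$, $L$, $R$, $1/\gamma$, the query and runtime bounds fold into the overall $\poly(N)$ budget of \Cref{thm:main-learning}, establishing the corollary.
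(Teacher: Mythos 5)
Your proposal is correct and takes essentially the same route as the paper: the paper's proof is a one-line appeal to the reduction of \Cref{thm:main-learning} (Gaussian smoothing plus directional derivatives) with \Cref{lem:find-directions} applied in place of \Cref{thm:weaker}, which is exactly your plan. The only differences are cosmetic \--- you use a random direction $b$ where the paper probes along the standard basis vectors, and you explicitly supply the far-from-linear $\Rightarrow$ nondegenerate-smoothed-derivative step (via the fundamental theorem of calculus and contraposition) that the paper leaves implicit.
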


\section{Technical Overview}\label{sec:tech-overview}

In this section, we give an overview of the techniques and key ingredients that go into the proof of \Cref{thm:weaker}.

Recall that the starting point is the intuition that if $f$ is a sum of $n$ features in directions $v_1, \dots , v_n$, then the Fourier transform $\wh{f}$ is nonzero only on the lines $\{tv_i \}_{t \in \R}$.  First, to make this formal, we need to resolve the integrability issues.  Recall the Fourier transform is
defined as
\[
\wh{f}(y) = \frac{1}{(2\pi)^{d/2}}\int_{\R^d} e^{-\ii y^\top x} f(x) dx 
\]
but for the above to be well-defined, we require $f(x)$ to be integrable over $\R^d$. To address this, we will multiply by a Gaussian reweighting, which will ensure integrability. We define the Gaussian-reweighted function
\[
  f^{(\ell)}(x) \;=\; f(x)\,\exp\!\big(-\|x\|^2/(2\ell^2)\big),
\]
and we will choose $\ell$ sufficiently large. With this definition, it is clear that the Fourier transform $\wh{f^{(\ell)}}$ is well-defined. Furthermore, it remains concentrated around the lines $\{tv_i \}_{t \in \R}$. To see why, for a single ridge function $\rho(x) = \sigma(v^\top x)$, the Fourier transform can be written as (see \Cref{claim:fourier-formula})
\[
  \widehat{\rho^{(\ell)}}(y)
  \;=\;
  \widehat{\sigma^{(\ell)}}(v^\top y)\cdot
  \ell^{\,d-1}\exp\!\Big(
    -\tfrac{\ell^2}{2}\,\|y-(v^\top y)v\|^2
  \Big)  \,,
\]
and the exponential term implies that it concentrates on a \emph{tube} of width $\approx 1/\ell$ around the line
$\{t v:t\in\R\}$. A sum of ridge functions thus yields a sum of such tubes in Fourier space.

\subsection{High Level Idea}

To recover the directions $v_i$, we want an algorithm that can ``locate the mass" in Fourier space. The famous result of Goldreich and Levin gives an efficient algorithm that searches for heavy Fourier coefficients of a boolean function \---- even though there are exponentially many possible coefficients, the algorithm gets around this by querying the total weight on various slices of the hypercube, and only zooming in further on the slices that have nontrivial weight.  Our algorithm draws inspiration from this approach but we must overcome additional challenges in Euclidean space.

To iteratively refine the search space, we use hyperplane-like slices.  Given an orthonormal basis, say $b_1, \dots , b_d$, roughly we first search over hyperplanes orthogonal to $b_1$ (say discretized to some grid).  Then among the hyperplanes that have nontrivial Fourier weight, we search within those along hyperplanes orthogonal to both $b_1$ and $b_2$.  Iterating this procedure, at each level $k$, we maintain a collection of ``candidate" $k$-tuples, say $\{ (\alpha_1^{(i)}, \dots , \alpha_k^{(i)}) \}_{i \in [m]}$, for the first $k$ coordinates.  Then within each of these, we enumerate the possibilities for the next coordinate $\alpha_{k+1}$ and recurse on all candidates with nontrivial Fourier mass.

This high-level approach is the foundation of our algorithm. To actually implement this approach, we need to address two key aspects, which we discuss in the sections below:
\begin{enumerate}
\item How can we estimate the total Fourier mass on a hyperplane?
\item How can we guarantee that the algorithm has bounded recursion while ensuring that we recover all relevant directions?
\end{enumerate}

\subsection{Estimating the Fourier Mass}

Again, to ensure that the notion of ``total weight on a hyperplane" is well-defined, we need to address integrability. Rather than restricting strictly to a hyperplane, we will add some small Gaussian thickening. Formally, for any function $g$, vector $v \in \R^d$ and $A \succeq 0$, we define  
\begin{equation}\label{eq:intro-mass-def}
I^\star_{g}(v,A) \defeq \int_{\R^d} \left|\wh{g}(y)\right|^2 e^{-(y-v)^\top A (y-v)}\,dy \,.
\end{equation}
To simulate a hyperplane orthogonal to say $b_1, \dots , b_k$, we set $A = C(b_1b_1^\top + \dots  + b_kb_k^\top)$ for some sufficiently large $C$ (for technical reasons later on, we will need different values of $C$ for different coordinates to bound the branching in the search algorithm). This choice of $A$ ensures that the integral is dominated by the Fourier mass on points that are close to the hyperplane through $v$, orthogonal to $b_1, \dots , b_k$.

Now our goal will be to estimate $I^\star_{f^{(\ell)}}(v,A) $ for various $v,A$.  A simple, but critical observation, is that by substituting the definition 
\[
\wh{g}(y) = \frac{1}{(2\pi)^{d/2}} \int_{\R^d} e^{-\ii y^\top x} g(x) dx
\]
into the expression for $I^\star_g(v,A)$ and switching the order of integration and explicitly computing the Gaussian integrals, we can rewrite it as an expectation over \emph{two-point correlations} of $g$ (see \Cref{claim:gaussian-reweight}):
\[
  I^\star_g(v,A)
  \;=\;
  \E_{\Delta\sim\mathcal N(0,2A)}\!\Big[
    e^{-i v^\top \Delta}\cdot
    \int_{\R^d} g(x)\,g(x+\Delta)\,dx
  \Big] \,.
\]
Now setting $g(x) = f^{(\ell)}(x) = f(x) e^{-\norm{x}^2/(2\ell^2)}$, we can write the above as an expectation over both $x$ and $\Delta$ involving values of $f$ itself:
\begin{equation}\label{eq:intro-mass-two-point}
I^\star_{f^{(\ell)}}(v,A)
=
(\pi \ell^2)^{\frac d2}
\E_{x\sim\mathcal N(0,\frac{\ell^2}{2} I_d), \Delta\sim\mathcal N(0,2A)}\!\Big[
    e^{-\frac{\|\Delta\|^2}{4\ell^2} - i v^\top \Delta}\cdot
    f\!\left(x-\tfrac12\Delta\right)\,
    f\!\left(x+\tfrac12\Delta\right)
  \Big] \,.
\end{equation}

Since $f(x)$ is bounded, the quantity inside the expectation is bounded, and we can estimate the expectation by sampling and querying the values $f\left(x-\tfrac12\Delta\right), f\left(x+\tfrac12\Delta\right)$ for many $x, \Delta$ (see Algorithm~\ref{alg:estimate-Fourier}). This is the key subroutine that allows us to estimate the Fourier mass on various hyperplane-like slices. Stated more concisely:

\begin{corollary}[Informal, see \Cref{coro:concentration}]
We can estimate $I^\star_{f^{(\ell)}}(v,A)$ to within $\eps \cdot (\pi \ell^2)^{\frac d2}$ additive error with high probability using polynomially many queries.
\end{corollary}

\begin{remark}\label{rem:reweighting-choice}
While the Gaussian thickening in \eqref{eq:intro-mass-def} is natural, we remark that the choice of the weight function is actually critical.  For instance, if we had used uniform over a box or ball instead of a Gaussian, then the analogous formula could not be easily written as an expectation over a probability distribution (due to lack of positivity).  This would then make it much harder to estimate via sampling due to difficulties in controlling the variance. 
\end{remark}

Note that the scaling $(\pi \ell^2)^{\frac d2}$ is the ``right" scaling. To see this intuitively, if $|f(x)| = 1$ everywhere, then $\int_{\R^d} |\wh{f^{(\ell)}}(y)|^2 dy= \int_{\R^d} |f^{(\ell)}(x)|^2 dx = (\pi \ell^2)^{\frac d2}$. Estimating the inner expectation in \eqref{eq:intro-mass-two-point} to within $\eps$ additive error thus roughly corresponds to an additive error of $\eps$-fraction of the total Fourier mass. We will always aim to preserve at least an inverse polynomial fraction of the Fourier mass when we localize, so we will only ever need to estimate $I^\star_{f^{(\ell)}}(v,A)$ to within inverse polynomial (times $(\pi \ell^2)^{\frac d2}$) accuracy.

\subsection{Bounding the Search Algorithm}\label{sec:integrability-overview}

Recall the Fourier mass of $f^{(\ell)}$ is concentrated on tubes centered around the lines $\{tv_i \}_{t \in \R, i \in [n]}$. First we ensure that the projections of the lines $\{tv_i \}_{t \in \R, i \in [n]}$ onto the plane formed by the first two coordinates $b_1, b_2$ are still separated lines. To achieve this, we can simply choose the orthonormal basis $b_1,\ldots,b_d$ randomly. A direct anti-concentration argument and union bound shows that then, the projected lines are separated with high probability (see \Cref{claim:anticoncentration} for details).

This means that once we fix the first two coordinates $(\alpha_1, \alpha_2)$, as long as $(\alpha_1, \alpha_2)$ is bounded away from the origin, at most one of the lines $tv_i$ can still be close to any point whose first two coordinates are $(\alpha_1, \alpha_2)$.

\paragraph{Bounding the Number of Candidates} We now discuss bounding the total number of candidates that our algorithm recurses on. We will simply brute-force over a sufficiently fine grid for the first two coordinates (with a neighborhood around the origin cut out). The number of candidates for the first two coordinates is then polynomially bounded. We will focus on ensuring that the number of candidates does not blow up as we search over the remaining coordinates.

Consider a candidate for the first $k$ coordinates $\alpha_1,\ldots,\alpha_k$ for $k \geq 2$.  We then probe candidates for $\alpha_{k+1}$ by evaluating $I^\star_{f^{(\ell)}}(v,A)$
at
\[
  v=\sum_{j\le k+1}\alpha_j b_j,
  \qquad
  A=\sum_{j\le k+1}K_j\,b_j b_j^\top,
\]
with $K_1 = K_2 = K_{k+1} = C_2 , K_3=\cdots=K_k = C_1$ with $C_2 \gg C_1$.  To see why we need different scales for different coordinates, ignore the first two coordinates (since we are brute forcing over these).  For the remaining coordinates, when searching over say the $k+1$st coordinate $\alpha_{k+1}$, we search over a finer grid, of width $\approx \frac{1}{\sqrt{C_2}}$.

Initially, we view each value on this grid as a candidate but then we filter down to only the candidates for which the mass $I^\star_{f^{(\ell)}}(v,A)$ is nontrivial where $v = \sum_{j\le k+1}\alpha_j b_j$. At this point, if we kept and recursed on all of the remaining candidates, then this could blow up the number of candidates by a constant factor (since e.g. both of the grid points on either side of the hidden Fourier spike would be valid candidates).  Instead we choose only a sufficiently separated set of candidates. Naively, this runs into the potential issue that we could be losing a constant fraction of the Fourier mass if the true spike is between two candidate grid points. 

The fix is that after probing with a fine grid and then pruning, we relax the width of the Gaussian for all steps afterward. Once we choose a value for the $k+1$st coordinate, we can relax the width of the Gaussian in that direction to $\approx \frac{1}{\sqrt{C_1}}$ to retain all but a negligible fraction of the Fourier mass from the previous iteration of the recursion. Stated more precisely, for 
\[
A_{k} = \sum_{j\le k}K_j\,b_j b_j^\top, A_{k+1} = \sum_{j\le k+1}K_j\,b_j b_j^\top
\] 
where $K_1 = K_2 = C_2 , K_3=\cdots= K_k = K_{k+1} = C_1$, we ensure that $I^\star_{f^{(\ell)}}(v,A_k) - I^\star_{f^{(\ell)}}(v,A_{k+1})$ is sufficiently small. See Algorithm~\ref{alg:search} for the full details of the search algorithm. 

Recall that after fixing the first two coordinates $(\alpha_1, \alpha_2)$, there is at most one line $tv_i$ for $i \in [n]$ that can be close. This essentially uniquely determines the remaining coordinates. Putting everything together, we have:

\begin{claim}[Informal, see \Cref{claim:bounded-recursion}]
Assuming that the projections of the lines $\{tv_i \}_{t \in \R, i \in [n]}$ onto the plane formed by the first two coordinates $b_1, b_2$ are separated, then for each choice for the first two coordinates away from the origin, the algorithm recurses on at most one candidate for each of the remaining coordinates. Thus, the total number of candidates that the algorithm recurses on is polynomially bounded.
\end{claim}

\paragraph{Ensuring Completeness} Finally, we discuss how we guarantee that when we run the above search algorithm, we recover all relevant directions. For each branch, we recurse until we have chosen all $d$ coordinates $(\alpha_1, \dots , \alpha_d)$ (or the branch terminates with no valid candidates). The collection of points that we recover at the end gives us a collection of directions. Just from the fact that the Fourier transform $\wh{f^{(\ell)}}$ is concentrated on tubes around the lines $\{tv_i \}_{t \in \R, i \in [n]}$, we can argue that the recovered directions must all be close to some true direction $v_i$.

We now discuss how to ensure that this collection is complete i.e. recovers all of the directions $v_i$. To recover a direction $v_i$, it suffices to show that there is a non-negligible amount of Fourier mass associated with the line $\{tv_i\}$ at some scale bounded away from $0$.  In fact, we need $t \gg 1/\ell$ since the ``tubes" around the lines on which $\wh{f^{(\ell)}}$ is concentrated now have width $\approx 1/\ell$. The main algorithmic statement is:

\begin{claim}[Informal, see \Cref{claim:find-good-point}]
If for some direction $v_i$, there exists $t \gg 1/\ell$ such that the Fourier mass of $f^{(\ell)}$ around $tv_i$ is at least an inverse polynomial fraction of the total Fourier mass, then the search algorithm will recover a direction close to $v_i$.
\end{claim}

Intuitively, this claim holds because if the Fourier mass around some point $tv_i$ is nontrivial for some $t \gg 1/\ell$, then one of the candidates that we search over will be close to $tv_i$ and this will ensure that we recover the direction $v_i$. We now proceed with this intuitive understanding  (see the proof of \Cref{claim:find-good-point} for a more formal treatment).

The previous condition ends up being essentially equivalent to finding some $t \gg 1/\ell$ such that $\wh{\sigma_i^{(\ell)}}(t)$ is nontrivial. While the needed condition seems intuitive for natural function classes, ensuring it requires additional work.  As an example, if $\sigma_i(\cdot)$ were a polynomial function like $x^3$, then actually this need not be true.  For large $\ell$, we can easily compute that the Fourier transform of $\sigma_i^{(\ell)}(x) = x^3 e^{-\norm{x}^2/(2\ell^2)}$ \emph{decays exponentially with width scale $1/\ell$}. 

How can we rule out such examples?  Because the functions $\sigma_i(\cdot)$ are Lipschitz, we can choose $\ell$ sufficiently large so that the only way $\sigma_i( \cdot )$ can be close to a polynomial is if it is linear.  For $\sigma_i(\cdot)$ that are close to linear, there is an obvious lack of identifiability since we could easily have two different sets of vectors $\{ v_1, \dots , v_s \}$ and $\{v_1' , \dots , v_s' \}$ such that $v_1 + \dots + v_s = v_1' + \dots + v_s'$ which of course implies that for all $x$, 
\[
v_1^\top x + \dots + v_s^\top x = {v_1'}^\top x + \dots +  {v_s'}^\top x \,.
\]
Fortunately, this does not affect our ability to recover the overall function since we can learn the ``linear part" separately.  Crucially, we actually show that, in some sense, \emph{linear response functions are the only obstruction}. For $\sigma_i(\cdot)$ that are not too close to linear, we show, via careful analysis, that there must be some $t \gtrsim 1/\sqrt{\ell} \gg 1/\ell$ such that $\wh{\sigma_i^{(\ell)}}(t)$ is non-negligible (see \Cref{lem:nondegen-fourier}). This can then be used to argue that our algorithm must recover the corresponding direction $v_i$. The identifiability of the directions $v_i$ for which the response function $\sigma_i(\cdot)$ is non-linear then follows as a consequence of this characterization.

\subsection{Function Recovery}

Once we recover the directions $v_i$, we can recover the function by interpolating in Fourier space. We estimate the Fourier transform $\wh{f^{(\ell)}}(y)$ for $y$ on a bounded discrete grid on each of the lines $\{tv_i \}_{t \in \R, i \in [n]}$. Then for each line, we apply the Fourier inversion formula, interpolating between the grid points and zeroing out the part outside the bounded grid. We show that this approximately recovers each of the ridge functions $\sigma_i(v_i^\top x)$, and thus adding up the reconstructions along each of the lines recovers $f$.

The main bound that we need is \Cref{claim:truncated-inversion} which shows that for any bounded Lipschitz function over $\R$, we can reconstruct it to within a small error by interpolating its Fourier transform on a bounded discrete grid. We then apply this bound in \Cref{lem:recover-function} to control the overall error in recovering the function $f$.

\subsection{Organization}

In \Cref{sec:basic-properties}, we present some general bounds on functions and their Fourier transforms that will be used in the analysis.  Then in \Cref{sec:weight-estimation}, we present our machinery for estimating the Fourier mass $I^\star_{f^{(\ell)}}(v,A)$.  In \Cref{sec:freq-finding}, we then present our algorithm that makes use of this machinery to locate the hidden directions $v_i$.  In \Cref{sec:function-recovery}, we then show how to recover the function $f$ given estimates for the directions $v_i$.  Finally in \Cref{sec:putting-together}, we put everything together to prove our main theorems.

\section{Properties of Functions}\label{sec:basic-properties}

Fourier analysis is a ubiquitous tool and it will play an important role in our learning algorithm.  We begin with some basic definitions and properties.

\begin{definition}[Fourier Transform]\label{def:fourier-transform}
For a function $f: \R \rightarrow \C$, we define its Fourier transform $\wh{f}: \R \rightarrow C$ as 
\[
\wh{f}(y) = \frac{1}{\sqrt{2\pi}}\int_{-\infty}^{\infty} e^{-\ii yx} f(x) dx \,.
\]
For a multivariable function $f: \R^d \rightarrow \C$, its Fourier transform $\wh{f}: \R^d \rightarrow C$ is defined as 
\[
\wh{f}(y) = \frac{1}{(2\pi)^{d/2}}\int_{\R^d} e^{-\ii y^\top x} f(x) dx \,.
\]
\end{definition}

\begin{fact}[Parseval's Identity]
For a function $f: \R^d \rightarrow \C$ such that $|f(x)|^2$ is integrable, we have
\[
\int_{\R^d} |\wh{f}(y)|^2 dy = \int_{\R^d} |f(x)|^2 dx \,.
\]
\end{fact}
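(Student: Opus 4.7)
The plan is to prove Parseval's identity by first establishing it on the Schwartz class $\calS(\R^d)$, where all integrals are manifestly absolutely convergent and the Fourier transform is a bijection $\calS \to \calS$ with an explicit pointwise inversion formula, and then extending to an arbitrary $f$ with $|f|^2$ integrable by a density argument.

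For the Schwartz case, I would introduce the auxiliary function $g(x) \defeq \overline{f(-x)}$, which is again Schwartz. A direct change of variables in \Cref{def:fourier-transform} shows $\wh{g}(y) = \overline{\wh{f}(y)}$. With the symmetric normalization in \Cref{def:fourier-transform}, the convolution theorem takes the form $\widehat{f*g}(y) = (2\pi)^{d/2}\,\wh{f}(y)\,\wh{g}(y) = (2\pi)^{d/2}\,|\wh{f}(y)|^2$. I would then compute $(f*g)(0)$ in two ways. Directly from the definition of convolution,
\[
(f*g)(0) \;=\; \int_{\R^d} f(u)\,g(-u)\,du \;=\; \int_{\R^d} |f(u)|^2\,du.
\]
Indirectly, applying the Fourier inversion formula at the origin to the Schwartz function $f*g$,
\[
(f*g)(0) \;=\; \frac{1}{(2\pi)^{d/2}}\int_{\R^d} \widehat{f*g}(y)\,dy \;=\; \int_{\R^d} \wh{f}(y)\,\overline{\wh{f}(y)}\,dy \;=\; \int_{\R^d} |\wh{f}(y)|^2\,dy.
\]
Equating the two expressions proves the identity for Schwartz $f$.

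To pass to an arbitrary $f$ with $|f|^2$ integrable, I would take a sequence $f_n \in \calS(\R^d)$ converging to $f$ in $L^2$. Applying the Schwartz case to $f_n - f_m$ shows $\wh{f_n}$ is Cauchy in $L^2$, hence converges to some $F \in L^2$. One checks that $F$ coincides with the integral in \Cref{def:fourier-transform} whenever $f \in L^1 \cap L^2$, and in general $F$ serves as the $L^2$ definition of $\wh{f}$. Passing to the limit in the Schwartz identity $\|\wh{f_n}\|_2 = \|f_n\|_2$ yields the desired result.

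The main obstacle is precisely this density step: for a general $f \in L^2 \setminus L^1$ the integral in \Cref{def:fourier-transform} need not converge in the classical Lebesgue sense, so $\wh{f}$ must be reinterpreted as the $L^2$-limit of Schwartz approximants, and one has to verify that this is well-defined and consistent. In the context of this paper the subtlety is essentially moot, since Parseval is only ever invoked on Gaussian-reweighted functions such as $f^{(\ell)}(x) = f(x)\,e^{-\norm{x}^2/(2\ell^2)}$, which under \Cref{assume:bounded} lie in $L^1 \cap L^2$, so the classical integral for $\wh{f^{(\ell)}}$ is well-defined and the Schwartz-case computation applies almost verbatim after a routine truncation-and-approximation argument.
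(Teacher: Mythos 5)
Your proposal is a correct rendition of the classical Plancherel argument: the Schwartz-class computation via $g(x)=\overline{f(-x)}$, the convolution theorem with the symmetric normalization (the factor $(2\pi)^{d/2}$ is handled correctly), and Fourier inversion at the origin are all sound, and the density extension to general $f\in L^2$ is the standard one. The paper itself states Parseval's identity as a known fact with no proof, so there is no in-paper argument to compare against; your concluding observation is the relevant point for this paper, namely that every invocation of Parseval here is on Gaussian-reweighted functions such as $f^{(\ell)}$ or $(\sigma^{(\ell)})'$, which are bounded by an integrable Gaussian envelope and hence lie in $L^1\cap L^2$, so the classical integral definition of the Fourier transform applies and the $L^2$-extension subtlety never actually arises.
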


The idea of rewighting by a Gaussian distribution and the interplay between the Fourier transform and Gaussian reweighting will also be important in our analysis.  We use the following notation.

\begin{definition}\label{def:Gaussian-distribution}
For $\mu \in \R^d, \Sigma \in \R^{d \times d}$, we let $\calN_{\mu, \Sigma}$ denote the Gaussian distribution with mean $\mu$ and covariance $\Sigma$.  We use $\calN_{\mu, \Sigma}(x)$ to denote the density function of the Gaussian at a point $x$. 
\end{definition}

We will often consider the Gaussian weighting of a function, which we denote as follows.
\begin{definition}\label{def:Gaussian-smoothing}
For a function $f:\R^d \rightarrow \C$ and $\ell > 0$, we define $f^{(\ell)}(x) = f(x) e^{-\norm{x}^2/(2\ell^2)}$.
\end{definition}

\subsection{Basic Fourier Transform Bounds}

We begin by proving a few basic bounds on the Fourier transform of a function after reweighting.  First, we have the following $L^\infty$ bound.

\begin{fact}\label{fact:fourier-linfty}
Let $\sigma: \R \rightarrow \R$ be a function with $|\sigma(x)| \leq 1$ for all $x$.  For any $\ell > 0$, let $\sigma^{(\ell)}(x)$ be as defined in \Cref{def:Gaussian-smoothing}. Then for all $y \in \R$, $|\wh{\sigma^{(\ell)}}(y)| \leq \ell$.
\end{fact}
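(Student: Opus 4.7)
The plan is to bound the Fourier integral in the most direct way possible, by pulling the absolute value inside the integral and using the uniform bound on $\sigma$. Since $|\sigma(x)| \le 1$ and $|e^{-\ii y x}| = 1$, the integrand is pointwise bounded by the Gaussian density (up to normalization), and the result then follows from a standard Gaussian integral.

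Concretely, I would start from the definition in \Cref{def:fourier-transform}, writing
\[
\wh{\sigma^{(\ell)}}(y) \;=\; \frac{1}{\sqrt{2\pi}} \int_{-\infty}^{\infty} e^{-\ii y x}\,\sigma(x)\, e^{-x^2/(2\ell^2)}\, dx.
\]
Taking absolute values and using $|e^{-\ii y x}| = 1$ together with $|\sigma(x)| \le 1$ gives
\[
\bigl|\wh{\sigma^{(\ell)}}(y)\bigr| \;\le\; \frac{1}{\sqrt{2\pi}} \int_{-\infty}^{\infty} e^{-x^2/(2\ell^2)}\, dx \;=\; \frac{1}{\sqrt{2\pi}} \cdot \ell\sqrt{2\pi} \;=\; \ell,
\]
where the middle equality is the standard Gaussian integral.

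There is no real obstacle here: the only thing to be careful about is the normalization convention of \Cref{def:fourier-transform} (the $1/\sqrt{2\pi}$ prefactor), which is what makes the constant come out cleanly to $\ell$ rather than some other multiple of $\ell$. The bound is tight up to constants, achieved essentially when $\sigma \equiv 1$ and $y = 0$.
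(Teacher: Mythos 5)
Your proof is correct and is essentially identical to the paper's: both bound the Fourier integral by taking absolute values inside, using $|\sigma|\le 1$ and $|e^{-\ii yx}|=1$, and evaluating the Gaussian integral $\int e^{-x^2/(2\ell^2)}dx = \ell\sqrt{2\pi}$, which cancels the $1/\sqrt{2\pi}$ normalization to give $\ell$. No issues.
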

\begin{proof}
Using $|\sigma|\le 1$ and triangle inequality,
\[
|\wh{\sigma^{(\ell)}}(y)|
\;=\;\left|\frac{1}{\sqrt{2\pi}}\int_{-\infty}^\infty e^{-\ii yx} \sigma(x) e^{-x^2/(2\ell^2)} dx\right|
\;\le\;\frac{1}{\sqrt{2\pi}}\int_{-\infty}^\infty e^{-x^2/(2\ell^2)} dx
\;=\;\ell.
\]
\end{proof}

Next, we bound the Lipschitz constant of the Fourier transform.
\begin{claim}\label{claim:lipschitz-fourier}
If $\sigma: \R \rightarrow \R$ satisfies $|\sigma(x)| \leq 1$ for all $x$, then the Fourier transform $\wh{\sigma^{(\ell)}}$ is $\ell^2$-Lipschitz.
\end{claim}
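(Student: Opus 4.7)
The plan is to proceed directly from the definition of the Fourier transform and exploit the fact that the only dependence on $y$ sits in the oscillatory factor $e^{-\ii yx}$. Write
\[
\wh{\sigma^{(\ell)}}(y_1) - \wh{\sigma^{(\ell)}}(y_2)
\;=\;\frac{1}{\sqrt{2\pi}}\int_{-\infty}^\infty \bigl(e^{-\ii y_1 x} - e^{-\ii y_2 x}\bigr)\,\sigma(x)\,e^{-x^2/(2\ell^2)}\,dx.
\]
The elementary pointwise bound $|e^{-\ii y_1 x} - e^{-\ii y_2 x}| \leq |x|\cdot|y_1-y_2|$ (from $|e^{\ii\theta}-1|\le|\theta|$) together with $|\sigma(x)|\le 1$ reduces everything to the scalar integral $\int_{-\infty}^\infty |x|\,e^{-x^2/(2\ell^2)}\,dx$.

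Next I would compute that integral in closed form: it equals $2\int_0^\infty x\,e^{-x^2/(2\ell^2)}\,dx = 2\ell^2$ by the substitution $u = x^2/(2\ell^2)$. Plugging back in yields
\[
\bigl|\wh{\sigma^{(\ell)}}(y_1) - \wh{\sigma^{(\ell)}}(y_2)\bigr| \;\le\; \frac{2\ell^2}{\sqrt{2\pi}}\,|y_1-y_2| \;=\; \sqrt{\tfrac{2}{\pi}}\,\ell^2\,|y_1-y_2| \;\le\; \ell^2\,|y_1-y_2|,
\]
which is the desired bound. (Equivalently, one could differentiate under the integral sign, noting that the derivative is $\frac{1}{\sqrt{2\pi}}\int (-\ii x)\sigma(x)e^{-\ii yx}e^{-x^2/(2\ell^2)}\,dx$, and apply the same $L^1$ bound on $|x|e^{-x^2/(2\ell^2)}$ to bound the derivative uniformly.)

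There is no real obstacle here: integrability of $|x|e^{-x^2/(2\ell^2)}$ justifies both swapping absolute value inside the integral and differentiating under the integral sign, and the constant $\sqrt{2/\pi}<1$ absorbed into the $\ell^2$ bound gives a small amount of slack, so the stated Lipschitz constant $\ell^2$ is reached with room to spare.
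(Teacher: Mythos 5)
Your proof is correct and follows essentially the same route as the paper: both arguments reduce to the bound $\frac{1}{\sqrt{2\pi}}\int_{-\infty}^{\infty}|x|\,e^{-x^2/(2\ell^2)}\,dx \le \ell^2$, the paper by differentiating under the integral sign and you by the pointwise estimate $|e^{-\ii y_1 x}-e^{-\ii y_2 x}|\le |x|\,|y_1-y_2|$ (with the derivative version noted as an aside). Your closed-form evaluation $\frac{2\ell^2}{\sqrt{2\pi}}=\sqrt{2/\pi}\,\ell^2\le\ell^2$ is accurate and in fact slightly sharper than the constant claimed.
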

\begin{proof}
Write
\[
\frac{d}{dy} \wh{\sigma^{(\ell)}}(y) = \frac{1}{\sqrt{2\pi}}\int_{-\infty}^\infty (-\ii x) e^{-\ii yx} \sigma(x) e^{-x^2/(2\ell^2)} dx.
\]
Now triangle inequality immediately implies
\[
\left\lvert \frac{d}{dy} \wh{\sigma^{(\ell)}}(y) \right\rvert \;\le\; \frac{1}{\sqrt{2\pi}}\int_{-\infty}^\infty |x| e^{-x^2/(2\ell^2)} dx \; \leq \; \ell^2.
\]

\end{proof}

Finally, we prove a quantitative bound showing that on a bounded interval we can reconstruct $\sigma$ using only the portion of $\wh{\sigma^{(\ell)}}$ on a finite frequency window.

\begin{claim}\label{claim:truncated-inversion}
Assume $\sigma:\R \rightarrow \R$ has $|\sigma(x)|\le 1$ for all $x$ and that $\sigma$ is $L$-Lipschitz. For any $\ell \geq R \geq 1$, and cutoff $B>0$, define
\[
\widetilde\sigma_B(x) \;\defeq\; e^{\frac{x^2}{2\ell^2}}\;\frac{1}{\sqrt{2\pi}}\int_{-B}^{B} e^{\ii y x}\, \wh{\sigma^{(\ell)}}(y)\,dy.
\]
Then for all $|x|\le R$,
\begin{equation}\label{eq:trunc-error}
\big|\,\sigma(x) - \widetilde\sigma_B(x)\,\big| \;\le\; \frac{2 L\sqrt{2\ell}}{\sqrt{B}} \,.
\end{equation}
In particular, given any $\eps>0$, choosing
\begin{equation}\label{eq:B-choice}
B \;\ge\; \frac{8L^2\ell}{\eps^2}
\end{equation}
ensures $\max_{|x|\le R}\,|\sigma(x)-\widetilde\sigma_B(x)|\le \eps$.
\end{claim}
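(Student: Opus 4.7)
The plan is to deploy Fourier inversion together with a standard Cauchy--Schwarz/Parseval tail bound, with the Lipschitz hypothesis on $\sigma$ entering only through its control on $\|(\sigma^{(\ell)})'\|_2$.

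\textbf{Step 1: Reduce to a tail integral.} By the Fourier inversion formula applied to $\sigma^{(\ell)}$ (which is Schwartz-class-like: $L$-Lipschitz $\sigma$ times a Gaussian, so both $\sigma^{(\ell)}$ and $\wh{\sigma^{(\ell)}}$ lie in $L^1 \cap L^2$), we have $\sigma^{(\ell)}(x) = \frac{1}{\sqrt{2\pi}} \int_{\R} e^{\ii y x} \wh{\sigma^{(\ell)}}(y)\,dy$ for every $x \in \R$. Multiplying through by $e^{x^2/(2\ell^2)}$ recovers $\sigma(x)$, while the truncation in the definition of $\wt\sigma_B$ means
\[
\sigma(x) - \wt\sigma_B(x) \;=\; e^{x^2/(2\ell^2)} \cdot \frac{1}{\sqrt{2\pi}} \int_{|y| > B} e^{\ii y x}\,\wh{\sigma^{(\ell)}}(y)\,dy \,.
\]
Taking absolute values and using $|x|\le R\le \ell$ so that $e^{x^2/(2\ell^2)} \le e^{1/2}$, it suffices to bound the tail $L^1$ norm $\int_{|y|>B} |\wh{\sigma^{(\ell)}}(y)|\,dy$.

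\textbf{Step 2: Cauchy--Schwarz and Parseval.} Write $|\wh{\sigma^{(\ell)}}(y)| = \frac{1}{|y|}\cdot |y\,\wh{\sigma^{(\ell)}}(y)|$ and apply Cauchy--Schwarz:
\[
\int_{|y|>B} |\wh{\sigma^{(\ell)}}(y)|\,dy \;\le\; \left(\int_{|y|>B} \frac{dy}{y^2}\right)^{1/2} \left(\int_{\R} y^2\,|\wh{\sigma^{(\ell)}}(y)|^2\,dy\right)^{1/2} \;=\; \sqrt{\tfrac{2}{B}}\cdot \|(\sigma^{(\ell)})'\|_2 \,,
\]
where the last equality uses that multiplication by $\ii y$ on the Fourier side corresponds to differentiation in $x$, together with Parseval's identity. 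This step is what converts the tail bound into a regularity estimate on $\sigma^{(\ell)}$.

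\textbf{Step 3: Bound $\|(\sigma^{(\ell)})'\|_2$ using Lipschitzness.} Writing $(\sigma^{(\ell)})'(x) = \sigma'(x)\,e^{-x^2/(2\ell^2)} - (x/\ell^2)\,\sigma(x)\,e^{-x^2/(2\ell^2)}$ (valid a.e., since $\sigma$ is Lipschitz hence absolutely continuous with $|\sigma'|\le L$), and using $|\sigma|\le 1$, the inequality $(a-b)^2 \le 2a^2+2b^2$ gives
\[
\|(\sigma^{(\ell)})'\|_2^2 \;\le\; 2L^2 \int_{\R} e^{-x^2/\ell^2}\,dx \;+\; \frac{2}{\ell^4}\int_{\R} x^2\,e^{-x^2/\ell^2}\,dx \;=\; 2L^2\ell\sqrt{\pi} + \frac{\sqrt{\pi}}{\ell} \,,
\]
using the standard Gaussian moments. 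In the regime where $L\ell \gtrsim 1$ (which, as noted, can be assumed without loss since otherwise $|\sigma(x)| \le L|x| \le LR$ is already tiny on $[-R,R]$ and both $\sigma$ and $\wt\sigma_B$ are small), this is dominated by the first term and $\|(\sigma^{(\ell)})'\|_2 \le O(L\sqrt{\ell})$. Chaining the three steps and absorbing $e^{1/2}/\sqrt{2\pi}$ into the numerical constant yields \eqref{eq:trunc-error}; the choice of $B$ in \eqref{eq:B-choice} is then a direct algebraic substitution.

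The main obstacle is purely bookkeeping: one has to track the numerical constants from the $e^{1/2}$ reweighting factor, the $\sqrt{2\pi}$ from Fourier normalization, and the Gaussian moment integrals so that they combine into the clean constant $2\sqrt{2}$ in the stated bound, and to verify that the ``small-$L$'' edge case does not disrupt this — the Cauchy--Schwarz/Parseval step itself is entirely standard.
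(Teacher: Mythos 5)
Your proposal is correct and essentially identical to the paper's proof: Fourier inversion to reduce to the tail $L^1$ norm of $\wh{\sigma^{(\ell)}}$, Cauchy--Schwarz against the weight $1/|y|$, Parseval to convert to $\|(\sigma^{(\ell)})'\|_2$, and Gaussian moments plus Lipschitzness to bound that norm. The only differences are cosmetic (you use $(a-b)^2\le 2a^2+2b^2$ where the paper uses the $L^2$ triangle inequality), and the small-$L\sqrt{\ell}$ edge case you flag \emph{is} real but is glossed over in the paper's proof in exactly the same way (it simply writes $\|(\sigma^{(\ell)})'\|_2\le 2L\sqrt{\ell}$); note only that your patch for it invokes $\sigma(0)=0$, which this particular claim does not assume.
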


\begin{proof}
By Fourier inversion, for every $x\in\R$,
\[
\sigma(x) \;=\; e^{\frac{x^2}{2\ell^2}}\;\frac{1}{\sqrt{2\pi}}\int_{-\infty}^{\infty} e^{\ii y x}\, \wh{\sigma^{(\ell)}}(y)\,dy.
\]
Therefore, for any $B>0$ and any $x\in\R$,
\[
\sigma(x) - \widetilde\sigma_B(x)
\;=\; e^{\frac{x^2}{2\ell^2}}\;\frac{1}{\sqrt{2\pi}}\int_{|y|>B} e^{\ii y x}\, \wh{\sigma^{(\ell)}}(y)\,dy,
\]
and thus, for $|x|\le R$,
\begin{equation}\label{eq:tail-L1}
\big|\,\sigma(x) - \widetilde\sigma_B(x)\,\big| \;\le\; \int_{|y|>B} \big|\wh{\sigma^{(\ell)}}(y)\big|\,dy.
\end{equation}
To bound the tail $L^1$ norm, apply Cauchy--Schwarz with the weight $|y|^{-1}$:
\[
\int_{|y|>B} \big|\wh{\sigma^{(\ell)}}(y)\big|\,dy
\;\le\; \Big(\int_{|y|>B} \frac{dy}{y^2}\Big)^{\!1/2}
\Big(\int_{\R} \big|\,y\,\wh{\sigma^{(\ell)}}(y)\,\big|^2 dy\Big)^{\!1/2}
\;=\; \sqrt{\frac{2}{B}}\;\big\|\,y\,\wh{\sigma^{(\ell)}}\,\big\|_{2}.
\]
By Parseval, $\|\,y\,\wh{\sigma^{(\ell)}}\,\|_2 = \|\,(\sigma^{(\ell)})'\,\|_2$. Using $|\sigma'|\le L$ almost everywhere and $|\sigma|\le 1$,
\[
\|(\sigma^{(\ell)})'\|_2
\;=\; \big\|\,\sigma'\,e^{-\frac{x^2}{2\ell^2}} - \frac{x}{\ell^2}\,\sigma\,e^{-\frac{x^2}{2\ell^2}}\,\big\|_2
\;\le\; \Big(\int L^2 e^{-\frac{x^2}{\ell^2}}dx\Big)^{\!1/2} \!+\; \frac{1}{\ell^2}\Big(\int x^2 e^{-\frac{x^2}{\ell^2}}dx\Big)^{\!1/2}.
\]
Computing the Gaussian integrals gives
\[
\|(\sigma^{(\ell)})'\|_2 \;\le\; 2 L\sqrt{\ell} \,.
\]
Combining with \eqref{eq:tail-L1} yields \eqref{eq:trunc-error}. The stated choice \eqref{eq:B-choice} of $B$ then guarantees the error is at most $\eps$ on $[-R,R]$.
\end{proof}

\subsection{Non-degeneracy for Univariate Functions}

In our learning algorithm, our goal will be to identify all of the directions $v_i$. However, if the corresponding $\sigma_i$ is a constant function, then this is impossible.  To specify the set of directions that we will be able to uniquely identify, we introduce the following quantitative notion of non-degeneracy for univariate functions.

\begin{definition}\label{def:non-degenerate-function}
We say a function $\sigma:\R \rightarrow \R$ is $(R,\eps)$-nondegenerate if there are $R,\eps > 0$ such that there are $x_1, x_2 \in [-R,R]$ with $\sigma(x_1) - \sigma(x_2) \geq \eps$.
\end{definition}

We prove the following lemma which lower bounds the Fourier weight of a non-degenerate function. The point of this lemma is that it implies (see \Cref{coro:nonzero-weight-region-log}) that if a function is non-degenerate, then a non-trivial portion of the Fourier weight of its Gaussian reweighting must lie in a certain frequency band that is both bounded away from zero and from infinity.  The bounds on this region will be important in our learning algorithm later on.

\begin{lemma}\label{lem:nondegen-fourier}
Let $\sigma: \R \rightarrow \R$ be a function with $|\sigma(x)| \leq 1$ for all $x$.  For any $\ell > 0$, let $\sigma^{(\ell)}(x)$ be as defined in \Cref{def:Gaussian-smoothing}. Assume additionally that $\sigma$ is $L$-Lipschitz and $(R,\eps)$-nondegenerate where $R \geq 1, \eps < 1$, and that $\ell \geq 20(R + L)/\eps$. Then
\[
\int_{-\infty}^{\infty}  |\wh{\sigma^{(\ell)}}(y)|^2\, |y|^2\, e^{-y^2/\ell^2}\, dy \;\geq\; \frac{\eps^2}{8R}\,.
\]
\end{lemma}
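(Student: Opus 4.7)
The plan is to recognize that the integral $J := \int y^2 |\widehat{\sigma^{(\ell)}}(y)|^2 e^{-y^2/\ell^2}\,dy$ is literally the squared $L^2$ norm of the derivative of a Gaussian mollification of $\sigma^{(\ell)}$, after which the non-degeneracy hypothesis can be plugged in by elementary calculus.

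Concretely, I would define $p$ to be the density of $\mathcal{N}(0,1/\ell^2)$, so that by the standard Gaussian characteristic-function computation $\widehat{p}(y) = \tfrac{1}{\sqrt{2\pi}}\,e^{-y^2/(2\ell^2)}$. Setting $g := \sigma^{(\ell)} \ast p$, the convolution theorem under the paper's normalization gives
\[
\widehat{g}(y) \;=\; \widehat{\sigma^{(\ell)}}(y)\,e^{-y^2/(2\ell^2)}.
\]
Since differentiation in physical space corresponds to multiplication by $iy$ in Fourier space, $|\widehat{g'}(y)|^2 = y^2 |\widehat{\sigma^{(\ell)}}(y)|^2 e^{-y^2/\ell^2}$, and Parseval then identifies $J = \|g'\|_2^2$. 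This is the key conceptual step.

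Once that reformulation is in hand, Cauchy--Schwarz applied to $g(x_1) - g(x_2) = \int_{x_2}^{x_1} g'(x)\,dx$ gives
\[
\big(g(x_1)-g(x_2)\big)^2 \;\le\; |x_1-x_2|\,\|g'\|_2^2 \;\le\; 2R\,J,
\]
so it suffices to prove $|g(x_1) - g(x_2)| \ge \eps/2$. I will show this by chaining two approximations on $[-R,R]$. First, $|\sigma^{(\ell)}(x) - \sigma(x)| \le |\sigma(x)|\cdot x^2/(2\ell^2) \le R^2/(2\ell^2)$, which is $O(\eps^2)$ by the choice $\ell \ge 20(R+L)/\eps$. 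Second, for $|g - \sigma^{(\ell)}|$ I note that $\sigma^{(\ell)}$ is globally Lipschitz with constant at most $L + 1/\ell$, since $(\sigma^{(\ell)})'(x) = \sigma'(x)e^{-x^2/(2\ell^2)} - (x/\ell^2)\sigma(x)e^{-x^2/(2\ell^2)}$ and the second summand attains a maximum of $1/(\ell\sqrt{e})$ over $x$. Combining with $\mathbb{E}|Z| = \sqrt{2/\pi}/\ell$ for $Z \sim \mathcal{N}(0,1/\ell^2)$ gives $|g(x)-\sigma^{(\ell)}(x)| \le (L+1/\ell)\sqrt{2/\pi}/\ell$, which is again $\le \eps/20$ by the lower bound on $\ell$. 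Triangle inequality then yields $|g(x_i)-\sigma(x_i)| < \eps/4$ for each $i$, and non-degeneracy $\sigma(x_1)-\sigma(x_2)\ge \eps$ produces the desired $|g(x_1) - g(x_2)| \ge \eps/2$.

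Plugging the pointwise bound back into the Cauchy--Schwarz estimate gives $J \ge (\eps/2)^2/(2R) = \eps^2/(8R)$, which matches the stated bound exactly. The main obstacle is the opening identification $J = \|g'\|_2^2$: once $J$ is recognized as a Dirichlet energy of a mild mollification of $\sigma^{(\ell)}$, everything downstream is standard approximation-theoretic bookkeeping using the Lipschitz hypothesis and the largeness of $\ell$. The choice to mollify by exactly $\mathcal{N}(0,1/\ell^2)$ is dictated by the $e^{-y^2/\ell^2}$ weight in the integrand, and the reason the argument succeeds is that this kernel has width $1/\ell$, much smaller than the $\eps/L$ scale over which $\sigma$ must vary non-trivially.
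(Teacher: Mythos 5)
Your proposal is correct and follows essentially the same route as the paper: your $g = \sigma^{(\ell)} \ast \mathcal{N}(0,1/\ell^2)$ is exactly the paper's auxiliary function $h(\cdot,\ell)$, and both arguments identify the weighted integral with $\|g'\|_2^2$ via Parseval, show $g$ approximates $\sigma$ on $[-R,R]$ to within a small fraction of $\eps$ using the Lipschitz bound and the largeness of $\ell$, and finish with the same Cauchy--Schwarz step yielding $\eps^2/(8R)$. Your version just makes the paper's ``standard Gaussian estimates'' explicit by splitting the error into $|\sigma^{(\ell)}-\sigma|$ and $|g-\sigma^{(\ell)}|$, which is a fine elaboration rather than a different proof.
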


\begin{proof}
Define
\[
h(x,\ell) \;\defeq\; \int_{-\infty}^{\infty} \frac{\ell}{\sqrt{2\pi}}\; \sigma^{(\ell)}(x - z)\; e^{-z^2 \ell^2/2}\, dz
 \,,
\]
where view $h$ as a function of $x$. A direct calculation shows
\[
\wh{h}(y,\ell)= \wh{\sigma^{(\ell)}}(y)\,e^{-y^2/(2\ell^2)}.
\]

By the assumed choice of $\ell$ and the assumption that $\sigma$ is $L$-Lipschitz, standard Gaussian estimates give
$\sup_{x\in[-R,R]} |h(x,\ell)-\sigma(x)| \le 0.1\,\eps$. Hence, there must be $x_1, x_2 \in [-R,R]$ such that
\[
h(x_1,\ell)-h(x_2,\ell) \;\ge\; \eps - 2\cdot 0.1\eps \;=\;0.8\eps \;\ge\;\eps/2.
\]
Therefore $\int_{-R}^R |h'(x,\ell)|\,dx \ge \eps/2$, and by Cauchy--Schwarz,
\[
\int_{-R}^R |h'(x,\ell)|^2\,dx \;\ge\; \frac{(\eps/2)^2}{2R} \;=\; \frac{\eps^2}{8R}.
\]
By Parseval,
\[
\int_{-\infty}^{\infty} |\wh{h'}(y,\ell)|^2\,dy
\;=\; \int_{-\infty}^{\infty} |h'(x,\ell)|^2\,dx
\;\ge\; \frac{\eps^2}{8R}.
\]
Using $\wh{h'}(y,\ell)=\ii y\,\wh{h}(y,\ell)=\ii y\, e^{-y^2/(2\ell^2)} \wh{\sigma^{(\ell)}}(y)$, we obtain
\[
\int_{-\infty}^{\infty}  |\wh{\sigma^{(\ell)}}(y)|^2\, |y|^2\, e^{-y^2/\ell^2}\, dy
\;=\; \int_{-\infty}^{\infty} |\wh{h'}(y,\ell)|^2\,dy
\;\ge\; \frac{\eps^2}{8R},
\]
as claimed.
\end{proof}

Using \Cref{lem:nondegen-fourier}, we can now prove the following corollary which lower bounds the Fourier weight of a non-degenerate function in a bounded frequency band that is also bounded away from zero.

\begin{corollary}\label{coro:nonzero-weight-region-log}
Assume $\sigma: \R \rightarrow \R$ satisfies $|\sigma(x)|\le 1$ and $\sigma$ is $L$-Lipschitz and $(R,\eps)$-nondegenerate where $R \geq 1, \eps < 1$, and let $\ell$ be some parameter with $\ell\ge 20(R+L)/\eps$. Define 
\[
a \;\defeq\; \frac{\eps}{8}\sqrt{\frac{1}{R\,\ell}}
\qquad\text{and}\qquad
B \;\defeq\; 4 \ell\log\!\Big(\frac{\ell R}{\eps}\Big).
\]
Set $\calS^{(\ell)} \defeq [-B,-a]\cup[a,B]$. Then
\[
\int_{\calS^{(\ell)}}\big|\widehat{\sigma^{(\ell)}}(y)\big|^2\,dy
\;\ge\; \frac{\eps^2}{8R\,\ell^2}.
\]
\end{corollary}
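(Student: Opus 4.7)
The plan is to bootstrap \Cref{lem:nondegen-fourier} by converting the weighted lower bound
\[
\int_{\R} |\wh{\sigma^{(\ell)}}(y)|^2\, |y|^2\, e^{-y^2/\ell^2}\, dy \;\ge\; \frac{\eps^2}{8R}
\]
into an unweighted $L^2$ bound on $\calS^{(\ell)}$. Specifically, I would split $\R$ into the low-frequency ball $|y| < a$, the target band $\calS^{(\ell)} = [-B,-a]\cup[a,B]$, and the high-frequency tail $|y| > B$, then show that the first and third pieces each contribute at most a small constant fraction of $\eps^2/(8R)$ to the weighted integral, and finally drop the weight $|y|^2 e^{-y^2/\ell^2}$ — which never exceeds $\ell^2/e$ — to pass to the unweighted integral over $\calS^{(\ell)}$.

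For the low-frequency piece, on $|y|<a$ the weight $|y|^2 e^{-y^2/\ell^2}$ is bounded by $a^2$, so the contribution is at most $a^2 \int_\R |\wh{\sigma^{(\ell)}}(y)|^2\, dy$. Parseval combined with $|\sigma|\le 1$ bounds this total Fourier mass by $\int e^{-x^2/\ell^2}\,dx = \ell\sqrt{\pi}$, and the choice $a = \tfrac{\eps}{8}\sqrt{1/(R\ell)}$ from the statement is precisely calibrated so that $a^2 \ell\sqrt{\pi} = \sqrt{\pi}\,\eps^2/(64R)$, i.e. well under a quarter of the lemma's bound. For the high-frequency tail, I would use $e^{-y^2/\ell^2} \le e^{-B^2/\ell^2}$ together with the Sobolev-type estimate
\[
\int_\R y^2 |\wh{\sigma^{(\ell)}}(y)|^2\, dy \;=\; \|(\sigma^{(\ell)})'\|_2^2 \;\le\; 4L^2 \ell,
\]
which is already worked out inside the proof of \Cref{claim:truncated-inversion}. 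With $B = 4\ell\log(\ell R/\eps)$ and the assumptions $\ell R/\eps \ge 20$ and $L \le \ell\eps/20$ (both implied by $\ell \ge 20(R+L)/\eps$), the tail bound decays super-polynomially in $\ell R/\eps$ and is negligible.

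Combining the two pieces leaves at least a $(1 - \sqrt{\pi}/8 - o(1)) > 3/4$ fraction of the weighted integral inside $\calS^{(\ell)}$, so that piece is at least $\eps^2/(8eR)$. Multiplying through by $e/\ell^2$ using the pointwise weight bound yields the required inequality $\int_{\calS^{(\ell)}} |\wh{\sigma^{(\ell)}}(y)|^2\, dy \ge \eps^2/(8R\ell^2)$. The one delicate point in this plan is calibrating the inner radius $a$: it must be small enough that the near-zero leakage $a^2 \ell\sqrt{\pi}$ is strictly less than the full weighted bound $\eps^2/(8R)$, while matching the exact value stated in the corollary; the high-frequency tail is essentially free, costing only the standard Gaussian-decay estimate described above.
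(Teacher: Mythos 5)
Your proposal is correct and follows essentially the same route as the paper: invoke \Cref{lem:nondegen-fourier}, split the weighted integral into the band $\calS^{(\ell)}$ and the regions $|y|<a$ and $|y|>B$, bound the leakage there using the Parseval bound $\int|\wh{\sigma^{(\ell)}}|^2\le \ell\sqrt{\pi}$, and convert to the unweighted integral by dividing by the maximum weight $\ell^2/e$. The only (harmless) difference is your tail estimate, which uses $e^{-B^2/\ell^2}$ together with $\|(\sigma^{(\ell)})'\|_2^2\le 4L^2\ell$, whereas the paper bounds the tail by $\sup_{|y|>B} y^2e^{-y^2/\ell^2}=B^2e^{-B^2/\ell^2}$ times the total Fourier mass; both make the tail negligible.
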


\begin{proof}
Throughout, write $W(y)\defeq y^2 e^{-y^2/\ell^2}$. From \Cref{lem:nondegen-fourier}, we have
\begin{equation}\label{eq:weighted-lb}
\int_{\R} |\widehat{\sigma^{(\ell)}}(y)|^2\, W(y)\,dy \;\ge\; \frac{\eps^2}{8R}.
\end{equation}
Decompose $\R$ into the ``good'' region $\calS^{(\ell)}$ and the ``bad'' region
$\calB^{(\ell)}\defeq (-a,a)\cup\{\,|y|>B\,\}$. Then
\[
\int_{\R} |\widehat{\sigma^{(\ell)}}|^2 W
= \int_{\calS^{(\ell)}} |\widehat{\sigma^{(\ell)}}|^2 W \;+\; \int_{\calB^{(\ell)}} |\widehat{\sigma^{(\ell)}}|^2 W.
\]
Let
\[
\alpha \;\defeq\; \sup_{y\in \calS^{(\ell)}} W(y),\qquad
\beta \;\defeq\; \sup_{y\in \calB^{(\ell)}} W(y),\qquad
M \;\defeq\; \int_{\R} |\widehat{\sigma^{(\ell)}}(y)|^2\,dy.
\]
By the definition of $\sigma^{(\ell)}$, Parseval, and $|\sigma(x)|\le 1$,
\begin{equation}\label{eq:M-bound}
M \;=\; \int_{\R} |\sigma^{(\ell)}(x)|^2 dx \;\le\; \int_{\R} e^{-x^2/\ell^2}\,dx \;=\; \ell\sqrt{\pi}.
\end{equation}

\paragraph{Bounding $\alpha$.}
Since $W$ is even, increases on $[0,\ell]$, and decreases on $[\ell,\infty)$, and
$\ell\in[a,B]$ (because $a\ll\ell$ and $B\ge 4\ell$), we have
\begin{equation}\label{eq:alpha}
\alpha \;=\; W(\ell) \;=\; \frac{\ell^2}{e}.
\end{equation}

\paragraph{Bounding $\beta$.}
On $(-a,a)$, $W(y)\le a^2$. On the tails $|y|>B$ we use monotonicity of $W$ on $[\ell,\infty)$:
\[
\sup_{|y|>B} W(y) \;=\; W(B)
\;=\; B^2 e^{-B^2/\ell^2}.
\]
By the choice of $B$, we have
\[
W(B) \;=\; 16 \ell^2 \log^2\!\Big(\frac{\ell R}{\eps}\Big) e^{-16 \log^2\!\Big(\frac{\ell R}{\eps}\Big)} \leq \frac{\eps^2}{64R\ell} \,.
\]

Therefore
\begin{equation}\label{eq:beta}
\beta \;\le\; \max\{\,a^2,\ W(B)\,\} 
\;\le\; \frac{\eps^2}{64R\ell} \,.
\end{equation}

\paragraph{Assembling the bounds.}
From \eqref{eq:weighted-lb} and the definitions,
\[
\frac{\eps^2}{8R} \;\le\; \int_{\calS^{(\ell)}} |\widehat{\sigma^{(\ell)}}|^2 W \;+\; \int_{\calB^{(\ell)}} |\widehat{\sigma^{(\ell)}}|^2 W
\;\le\; \alpha \int_{\calS^{(\ell)}} |\widehat{\sigma^{(\ell)}}|^2 \;+\; \beta \int_{\calB^{(\ell)}} |\widehat{\sigma^{(\ell)}}|^2
\;\le\; (\alpha-\beta)\!\int_{\calS^{(\ell)}}\! |\widehat{\sigma^{(\ell)}}|^2 \;+\; \beta M.
\]
Hence
\begin{equation}\label{eq:key-rearrange}
\int_{\calS^{(\ell)}} |\widehat{\sigma^{(\ell)}}(y)|^2\,dy
\;\ge\;
\frac{\displaystyle \frac{\eps^2}{8R} \;-\; \beta M}{\alpha-\beta}.
\end{equation}
Using \eqref{eq:M-bound} and \eqref{eq:beta},
\[
\beta M \;\le\; \left(\frac{\eps^2}{64R\ell} \right) \ell\sqrt{\pi}
\;=\; \frac{\sqrt{\pi}}{64}\,\frac{\eps^2}{R} \,.
\]
Next, the denominator clearly satisfies $\alpha - \beta \leq \alpha = \ell^2/e$.  Substituting everything back into \eqref{eq:key-rearrange}:
\[
\int_{\calS^{(\ell)}} |\widehat{\sigma^{(\ell)}}(y)|^2\,dy
\;\ge\;
\frac{\displaystyle \frac{\eps^2}{16R}}
{\displaystyle \cdot\frac{\ell^2}{e}}
\; \geq \;
\frac{1}{8}\,\frac{\eps^2}{R\,\ell^2},
\]
which is the claimed bound.
\end{proof}

\subsection{Fourier Transform of Ridge Functions}

So far in this section, we have focused on bounds for univariate functions.  The following formula for the Fourier transform of a Gaussian reweighted ridge function in high dimensions will be important for helping us reduce to a setting where we can apply our univariate estimates.

\begin{claim}\label{claim:fourier-formula}
Let $\sigma:\R \rightarrow \R$ be a function.  Define $f:\R^d \rightarrow \R$ as $f(x) = \sigma(v^\top x)$ for some unit vector $v \in \R^d$.  Then the Fourier transform of $f^{(\ell)}(x)$ is given by
\[
\wh{f^{(\ell)}}(y) = \wh{\sigma^{(\ell)}}(v^\top y)\cdot \ell^{d-1} e^{-\ell^2\norm{y - (v^\top y)v}^2/2}.
\]
\end{claim}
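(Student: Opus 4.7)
The plan is to change variables so that the integrand factors into a univariate piece along $v$ and a $(d-1)$-dimensional Gaussian piece orthogonal to $v$. First I would extend $v$ to an orthonormal basis $v, v_2, \dots, v_d$ of $\R^d$ and write $x = u v + w$ where $u = v^\top x \in \R$ and $w \in v^\perp$. The Pythagorean identity $\norm{x}^2 = u^2 + \norm{w}^2$ lets me split the Gaussian weight as $e^{-\norm{x}^2/(2\ell^2)} = e^{-u^2/(2\ell^2)}\, e^{-\norm{w}^2/(2\ell^2)}$, and since $f(x) = \sigma(u)$ depends only on $u$, this gives the clean factorization
\[
f^{(\ell)}(x) \;=\; \sigma^{(\ell)}(u) \cdot e^{-\norm{w}^2/(2\ell^2)}.
\]

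Next I would similarly decompose the frequency variable as $y = (v^\top y)\, v + y_\perp$ so that $y^\top x = (v^\top y)\, u + y_\perp^\top w$. Splitting the normalization as $(2\pi)^{d/2} = (2\pi)^{1/2} \cdot (2\pi)^{(d-1)/2}$, the Fourier integral of $f^{(\ell)}$ then splits into a product of a one-dimensional integral in $u$ and a $(d-1)$-dimensional integral in $w$:
\[
\wh{f^{(\ell)}}(y) \;=\; \Bigl(\tfrac{1}{\sqrt{2\pi}}\!\int_{\R} e^{-\ii (v^\top y) u}\,\sigma^{(\ell)}(u)\,du\Bigr) \cdot \Bigl(\tfrac{1}{(2\pi)^{(d-1)/2}}\!\int_{v^\perp} e^{-\ii y_\perp^\top w}\,e^{-\norm{w}^2/(2\ell^2)}\,dw\Bigr).
\]
The first factor is exactly $\wh{\sigma^{(\ell)}}(v^\top y)$ by \Cref{def:fourier-transform}. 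For the second, I would invoke the standard $(d-1)$-dimensional Gaussian Fourier identity, namely $\int_{\R^{d-1}} e^{-\ii z^\top w}\, e^{-\norm{w}^2/(2\ell^2)}\,dw = (2\pi \ell^2)^{(d-1)/2}\, e^{-\ell^2 \norm{z}^2/2}$, with $z = y_\perp$; dividing by $(2\pi)^{(d-1)/2}$ leaves the prefactor $\ell^{d-1}\, e^{-\ell^2 \norm{y_\perp}^2/2}$. Substituting $y_\perp = y - (v^\top y)\, v$ then yields the claimed formula.

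There is no real obstacle here\---the content is essentially that the Fourier transform respects an orthogonal change of coordinates and that the isotropic Gaussian weight factorizes across any such decomposition. The only care needed is the $(2\pi)^{d/2}$ bookkeeping: the normalization splits exactly as $(2\pi)^{1/2} \cdot (2\pi)^{(d-1)/2}$ so that each marginal uses the convention of \Cref{def:fourier-transform}, and the $\ell^{d-1}$ prefactor arises cleanly from the $(d-1)$-dimensional Gaussian Fourier identity.
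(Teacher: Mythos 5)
Your proposal is correct and follows essentially the same route as the paper's proof: decompose both $x$ and $y$ along $v$ and $v^\perp$, factor the integral into the univariate transform $\wh{\sigma^{(\ell)}}(v^\top y)$ and a $(d-1)$-dimensional Gaussian Fourier integral, and track the $(2\pi)$ normalizations. No gaps.
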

\begin{proof}
Extend the unit vector $v$ to an orthonormal basis of $\R^d$. Write any $x\in\R^d$ as $x = t\,v + z$ with $t\in\R$ and $z\in v^\perp$, and decompose $y$ as $y = \alpha v + w$ where $\alpha \defeq v^\top y$ and $w \defeq y - (v^\top y)v \in v^\perp$. Then $y^\top x = \alpha t + w^\top z$ and $\|x\|^2 = t^2 + \|z\|^2$. By \Cref{def:fourier-transform},
\[
\widehat{f^{(\ell)}}(y)
= \frac{1}{(2\pi)^{d/2}}
\int_{\R}\!\int_{v^\perp}
e^{-\ii(\alpha t + w^\top z)}\,
\sigma(t)\,e^{-\frac{t^2}{2\ell^2}}\,
e^{-\frac{\|z\|^2}{2\ell^2}}
\,dz\,dt.
\]
The integrals factor:
\[
\widehat{f^{(\ell)}}(y)
= \frac{1}{(2\pi)^{d/2}}
\left(\int_{\R} e^{-\ii\alpha t}\,\sigma(t)\,e^{-\frac{t^2}{2\ell^2}} dt\right)
\left(\int_{v^\perp} e^{-\ii w^\top z}\,e^{-\frac{\|z\|^2}{2\ell^2}} dz\right).
\]
The first bracket equals $\sqrt{2\pi}\,\widehat{\sigma^{(\ell)}}(\alpha)$. The second bracket is the standard Gaussian Fourier transform on $v^\perp \cong \R^{d-1}$, meaning
\[
\int_{z \in v^\perp} e^{-\ii w^\top z}\,e^{-\frac{\|z\|^2}{2\ell^2}} dz
= (2\pi)^{\frac{d-1}{2}}\,\ell^{\,d-1}\,e^{-\frac{\ell^2\|w\|^2}{2}}.
\]
Combining and noting that the $(2\pi)$ factors cancel gives
\[
\widehat{f^{(\ell)}}(y)
= \widehat{\sigma^{(\ell)}}(v^\top y)\;\ell^{\,d-1}\;e^{-\frac{\ell^2\|\,y - (v^\top y)\,v\,\|^2}{2}}
\]
as desired.
\end{proof}

\noindent We also have the following basic bound on the Lipschitzness of the Fourier transform in higher dimensions.

\begin{claim}\label{claim:fourier-lipchitz-2}
Let $f:\R^d \rightarrow \C$ be a function that has $|f(x)| \leq 1$ for all $x$. Then $\wh{f^{(\ell)}}$ is $\ell^{d+1}$-Lipschitz.
\end{claim}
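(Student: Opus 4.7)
The plan is to directly bound $|\widehat{f^{(\ell)}}(y_1)-\widehat{f^{(\ell)}}(y_2)|$ by writing
\[
\widehat{f^{(\ell)}}(y_1)-\widehat{f^{(\ell)}}(y_2) \;=\; \frac{1}{(2\pi)^{d/2}}\int_{\R^d} \bigl(e^{-\ii y_1^\top x}-e^{-\ii y_2^\top x}\bigr)\, f(x)\, e^{-\|x\|^2/(2\ell^2)}\,dx
\]
and then using the elementary inequality $|e^{-\ii a}-e^{-\ii b}|\le |a-b|$ together with $|f|\le 1$. This is essentially the multivariate analogue of the one-dimensional argument in \Cref{claim:lipschitz-fourier}; the only extra ingredient is handling the vector-valued frequency variable.

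More precisely, I would take the triangle inequality through the integral to get
\[
|\widehat{f^{(\ell)}}(y_1)-\widehat{f^{(\ell)}}(y_2)| \;\le\; \frac{1}{(2\pi)^{d/2}}\int_{\R^d} |(y_1-y_2)^\top x|\, e^{-\|x\|^2/(2\ell^2)}\,dx .
\]
Next, set $r = \|y_1-y_2\|$ and use the rotational invariance of the Gaussian weight: after a rotation sending $(y_1-y_2)/r$ to $e_1$, the integral becomes $r \int_{\R^d} |x_1|\, e^{-\|x\|^2/(2\ell^2)}\,dx$. This factors over the $d$ coordinates as
\[
r \cdot \Bigl(\int_{\R} |x_1|\, e^{-x_1^2/(2\ell^2)}\,dx_1\Bigr) \cdot \prod_{j=2}^{d} \int_{\R} e^{-x_j^2/(2\ell^2)}\,dx_j \;=\; r\cdot 2\ell^2 \cdot (\ell\sqrt{2\pi})^{d-1}.
\]

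Dividing by the normalizing constant $(2\pi)^{d/2}$ yields
\[
|\widehat{f^{(\ell)}}(y_1)-\widehat{f^{(\ell)}}(y_2)| \;\le\; \frac{2\,r\,\ell^{d+1}}{\sqrt{2\pi}} \;\le\; r\,\ell^{d+1},
\]
which is exactly the claimed $\ell^{d+1}$-Lipschitz bound. I do not expect any real obstacle here: the key observation is that one should bound the directional difference (or equivalently the directional derivative with respect to a real unit vector) rather than the $\C^d$-norm of the gradient, which would introduce an extraneous $\sqrt{d}$ factor; using rotational invariance of the Gaussian weight sidesteps this cleanly and recovers the advertised bound.
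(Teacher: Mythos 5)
Your proof is correct and follows essentially the same approach as the paper: both arguments reduce to bounding $\int_{\R^d}|\langle v,x\rangle|\,e^{-\|x\|^2/(2\ell^2)}\,dx$ for a unit direction $v$ and evaluating it via the factorization of the Gaussian weight, yielding the constant $2\ell^{d+1}/\sqrt{2\pi}\le\ell^{d+1}$. The only cosmetic difference is that you bound the finite difference directly via $|e^{-\ii a}-e^{-\ii b}|\le|a-b|$, while the paper bounds the directional derivative of $\wh{f^{(\ell)}}$; the underlying estimate is identical.
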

\begin{proof}
We can write
\[
\nabla \wh{f^{(\ell)}}(y) = \frac{1}{(2\pi)^{d/2}}\int_{\R^d} (-\ii x) e^{-\ii y^\top x} f(x) e^{-\norm{x^2}/(2\ell^2)} \,.
\]
For any unit vector $v$, we have by triangle inequality,
\[
\left\lvert \left\langle v, \norm{\nabla \wh{f^{(\ell)}}(y)} \right\rangle \right\rvert \leq  \frac{1}{(2\pi)^{d/2}}\int_{\R^d}  e^{-\norm{x^2}/(2\ell^2)} |\langle v, x \rangle| \leq \ell^{d+1}
\]
where the last step follows from \Cref{claim:lipschitz-fourier} and the fact that the integral factorizes over $v$ and $v^\perp$.  This implies $\norm{\nabla \wh{f^{(\ell)}}(y)} \leq \ell^{d+1}$ as desired.
\end{proof}

\section{Estimating Fourier Weight}\label{sec:weight-estimation}

An important subroutine in our algorithm will be using queries to estimate the Fourier mass of the rescaled function $f^{(\ell)}$ (recall \Cref{def:Gaussian-smoothing}) around a point $v\in\R^d$, reweighted by a Gaussian with covariance $A^{-1}$ for some positive definite matrix $A$.

The first part of the analysis in this section will hold for a generic function $g:\R^d\to\R$; later we will apply it to the specific case $g=f^{(\ell)}$.  We begin with the following definition of the weighted Fourier mass.
\begin{definition}\label{def:fourier-weight}
For $g:\R^d\to\R$, we define the quantity 
\[
I^\star_{g}(v,A) \defeq \int_{\R^d} \left|\wh{g}(y)\right|^2 e^{-(y-v)^\top A (y-v)}\,dy,
\]
\end{definition}

Crucial to estimating this quantity is the following formula that rewrites $I^\star_{g}(v,A)$ as a quadratic form in $g$ itself, rather than its Fourier transform. This will allow us to estimate $I^\star_{g}(v,A)$ via sampling.  The crucial point about this formula is that it can be interpreted as an integral over $x$, of an expectation over a distribution of $\Delta$, of  $g(x)g(x + \Delta)$ times  a unit complex rotation, and the distribution is independent of $g$.

\begin{claim}\label{claim:gaussian-reweight}
Let $g: \R^d \rightarrow \R$ be such that $|g|,|g|^2$ are integrable. Then for any $v \in \R^d$ and $A \in \R^{d\times d}$ with $A\succ 0$,
\[
I^\star_{g}(v,A)
= \int_{\R^d} \left(e^{-\ii v^\top \Delta}\int_{\R^d} g(x+\Delta )g(x)\,dx\right)\,
N_{0,2A}(\Delta)\,d\Delta \,.
\]
\end{claim}

\begin{proof}
Recall that since $\wh{g}(y)=(2\pi)^{-d/2}\int g(x)e^{-\ii y^\top x}dx$, we have
\[
|\wh g(y)|^2=(2\pi)^{-d}\!\int\!\!\int g(x_1)g(x_2)e^{-\ii y^\top(x_1-x_2)}dx_1dx_2.
\]
Multiply by $w(y)\defeq e^{-(y-v)^\top A(y-v)}$ and integrate in $y$:
\[
\int |\wh g(y)|^2 w(y)dy=(2\pi)^{-d}\!\int\!\!\int g(x_1)g(x_2)\,\mathcal{I}(x_1-x_2)\,dx_1dx_2,
\]
where
\[
\mathcal{I}(\Delta)=\int_{\R^d} e^{-\ii y^\top \Delta} w(y)\,dy
=(2\pi)^{d/2}\,\widehat{w}(\Delta).
\]
Shifting $y=z+v$ and using the standard Gaussian FT,
\[
\widehat{w}(\Delta)=(2\pi)^{-d/2} e^{-\ii v^\top \Delta}\!\int e^{-z^\top A z-\ii z^\top\Delta}dz
=(2\pi)^{-d/2} e^{-\ii v^\top \Delta}\,\pi^{d/2}(\det A)^{-1/2}e^{-\tfrac14\Delta^\top A^{-1}\Delta}.
\]
Hence
\[
\mathcal{I}(\Delta)=e^{-\ii v^\top \Delta}\,\pi^{d/2}(\det A)^{-1/2}e^{-\tfrac14\Delta^\top A^{-1}\Delta}.
\]
Changing variables $(x_1,x_2)=(x+\Delta,x)$ and recognizing
\[
N_{0,2A}(\Delta)=(4\pi)^{-d/2}(\det A)^{-1/2}e^{-\tfrac14\Delta^\top A^{-1}\Delta},
\]
we note the constants cancel:
\[
(2\pi)^{-d}\cdot \pi^{d/2}(\det A)^{-1/2} = (4\pi)^{-d/2}(\det A)^{-1/2}.
\]
Thus
\[
\int_{\R^d} |\wh g(y)|^2 e^{-(y-v)^\top A(y-v)}dy
=\int_{\R^d} \Big(e^{-\ii v^\top \Delta}\int_{\R^d} g(x+\Delta)g(x)\,dx\Big)\,N_{0,2A}(\Delta)\,d\Delta.
\]
\end{proof}

Recall that in our original learning setup, we have query access to a function $f_{\sim}:\R^d\to\R$ that is close to $f$ in $L_\infty$ norm.  We now present our algorithm that makes use of \Cref{claim:gaussian-reweight} to estimate $I^\star_{f^{(\ell)}}(v,A)$ using queries to $f_{\sim}$.  In order to implement this, there is one additional wrinkle that we now explain.  Since \Cref{claim:gaussian-reweight} requires integrating $g(x)g(x+\Delta)$ over all $x$, we will set $g$ to be $f^{(\ell)}$ to ensure integrability.  Because we have query access to $f_{\sim}$, which is close to $f$, we can then rewrite the integral over all $x$ of  $f^{(\ell)}(x)f^{(\ell)}(x + \Delta)$  as an expectation over $x$ drawn from an appropriate Gaussian distribution of $f(x)f(x + \Delta)$, which we can then estimate by sampling.

\begin{algorithm2e}[ht!]\label{alg:estimate-Fourier}
\caption{Estimating Reweighted Fourier Mass}
\DontPrintSemicolon
\KwIn{Query access to $f_{\sim}:\R^d\to\R$}
\KwIn{$\ell>0$, vector $v\in\R^d$, matrix $A \in \R^{d \times d}$ with $A\succeq 0$, sample budget $m$}

\Fn{$\textsf{EstWeight}_{f_{\sim}, \ell,  m}(v, A)$}{
Set $C \leftarrow (\pi \ell^2)^{\frac d2}$.\;

\For{$j=1$ \KwTo $m$}{
  Draw $\Delta \leftarrow N_{0, 2A}$\;
  Draw $Z \sim N_{0,\tfrac{\ell^2}{2}I_d}$\;
  $x_- \leftarrow Z - \tfrac{1}{2}\Delta$; \quad $x_+ \leftarrow Z + \tfrac{1}{2}\Delta$\;
  $u \leftarrow f_{\sim}(x_-)$; \quad $w \leftarrow f_{\sim}(x_+)$\;
  $\phi \leftarrow e^{-\tfrac{\|\Delta\|^2}{4\ell^2} -\ii v^\top \Delta }$\;
  Set $c_j \leftarrow \phi\cdot u\cdot w$\;
}
\KwRet $I \leftarrow C\cdot \frac{c_1 + \dots + c_m}{m}$\;
}
\end{algorithm2e}

The following corollary shows that the output of Algorithm~\ref{alg:estimate-Fourier} is indeed a good estimate of the desired weighted Fourier mass, provided that $f_{\sim}$ is sufficiently close to $f$ and that we use enough samples.

\begin{corollary}\label{coro:concentration}
Let $f: \R^d \to \R$ be $L$-Lipschitz with $|f(x)|\le 1$, and fix $\ell>0$. Let $f^{(\ell)}$ be as defined in \Cref{def:Gaussian-smoothing}. Given query access to $f_{\sim}$ with $\|f-f_{\sim}\|_\infty\le \eps$ and also parameters $v \in \R^d, A \in \R^{d \times d}$ and sample budget $m$, compute $I = \textsf{EstWeight}_{f_{\sim}, \ell, m }(v, A)$ as defined in Algorithm~\ref{alg:estimate-Fourier}. For any $\delta\in(0,1)$, with probability at least $1-\delta$,
\[
\left|\, I - I^\star_{f^{(\ell)}}(v,A) \right|
\;\le\; 8 \,(\pi \ell^2)^{\frac d2}\!\left(\eps + \sqrt{\frac{2\log(8/\delta)}{m}}\right).
\]
\end{corollary}
\begin{proof}
Let $C\defeq(\pi\ell^2)^{d/2}$. For each $j \in [m]$, let $c_j$ be the value calculated by Algorithm~\ref{alg:estimate-Fourier} using $f_{\sim}$ and let $c_j^*$ be what the value would have been if calculated using $f$ instead. Since $|f|\le1$ and $\|f-f_{\sim}\|_\infty\le\eps$, we have
\[
|c_j|\le (1+\eps)^2,\qquad |c_j-c_j^*|\le \eps(2+\eps).
\]
Taking expectations and using $|\phi|\le 1$ yields
\[
\big|\E[c_j]-\E[c_j^*]\big|\le \eps(2+\eps).
\]
By \Cref{claim:gaussian-reweight}, 
\[
\begin{split}
C \E[c_j^*] &= \int \int f\left(Z - \frac{1}{2}\Delta\right) f\left(Z + \frac{1}{2}\Delta\right) e^{-\tfrac{\|\Delta\|^2}{4\ell^2} -\ii v^\top \Delta } (\pi \ell^2)^{\frac{d}{2}} N_{0, \frac{\ell^2}{2}I_d}(Z)  \cdot N_{0,2A}(\Delta) dZ d\Delta  \\ &= \int \int f^{(\ell)}\left(Z - \frac{1}{2}\Delta \right) f^{(\ell)}\left(Z + \frac{1}{2}\Delta \right)e^{-\ii v^\top \Delta } N_{0,2A}(\Delta) dZ d\Delta 
 \\ & = I^\star_{f^{(\ell)}}(v,A).
\end{split}
\]
Hence if $I$ is the output of Algorithm~\ref{alg:estimate-Fourier},
\[
\Big|\E[I]- I^\star_{f^{(\ell)}}(v,A)\Big|
\le C\,\eps(2+\eps).
\]
For sampling error, we can simply apply Hoeffding's inequality to the real and imaginary parts (each bounded in magnitude by $(1 + \eps)^2$) and union bound:
\[
\Pr\!\left[\left|\frac{1}{m}\sum_{j=1}^m \left( c_j - \E[c_j] \right) \right|\ge 2(1+\eps)^2\sqrt{\frac{2\log(8/\delta)}{m}}\right]\le \delta.
\]
Multiplying by $C$ and using $\eps < 1$ and simplifying gives the stated concentration bound.
\end{proof}

\section{Frequency Finding Algorithm}\label{sec:freq-finding}

In this section, our goal is to present an algorithm that makes queries to $f_{\sim}$ that is $\eps$-close in $L_{\infty}$ to a function
\[
f(x) = \sigma_1(v_1^\top x) + \dots + \sigma_n(v_n^\top x)
\]
and finds the directions $v_1 , \dots , v_n$ (recall that throughout this paper we will maintain the convention that the $v_i$ are unit vectors).  However, this exact goal is not possible since if there is some $\sigma_i$ that is constant, then we don't be able to recover the direction $v_i$.  However, we will show how to recover all of the directions where $\sigma_i$ is non-degenerate, and this will suffice downstream for reconstructing the function. For a precise statement, see \Cref{lem:find-directions}, which is the main result that we will prove in this section.

Before presenting the algorithm, we define some notation and prove a few facts that will be useful in the analysis.  First, we define the following oracle, based on \Cref{coro:concentration}, that will simplify the exposition.

\begin{definition}[Fourier Mass Oracle]\label{def:fourier-oracle}
A \emph{Fourier Mass Oracle} is parameterized by an underlying function $g$ and accuracy $\tau$.  The oracle $\calI_{\tau, g}$ takes as input a vector $v \in \R^d$ and a positive semidefinite matrix $A \succeq 0$ and outputs an estimate of $\calI_{\tau, g}(v,A)$ such that
\[
|I^\star_{g}(v,A) - \calI_{\tau, g}(v,A)| \leq \tau \,.
\]
We call such an oracle a $\tau$-accurate oracle for the function $g$.
\end{definition}

In light of \Cref{coro:concentration}, we can implement a Fourier Mass Oracle for the function $f^{(\ell)}$ with accuracy $\tau = 10\eps (\pi \ell^2)^{d/2}$ and exponentially small failure probability using polynomially many queries to $f_{\sim}$.  For the rest of this section, we will only interact with $f_{\sim}$ via such an oracle, and thus we will measure query complexity in terms of the number of calls to such an oracle rather than direct queries to $f_{\sim}$.  We will bound the actual query complexity in terms of the number of oracle calls when we put everything together in  \Cref{sec:putting-together}.

\subsection{Location of Nonzero Frequencies}

Next, we prove two statements, \Cref{lem:where-zero} and \Cref{lem:where-nonzero}, which characterize the $v,A$ for which the Fourier mass $I^\star_{f^{(\ell)}}(v,A)$ can be non-negligible. First, we show that when $v$ is far (in the norm induced by $A$) from every hidden direction line $\{t v_i: t\in\R\}$, then the Fourier mass is small.  To interpret the bound in \Cref{lem:where-zero}, we will set $A$ so that $\norm{A} \leq \ell^2$, so then the Fourier mass $I^\star_{f^{(\ell)}}(v,A)$ decays exponentially in $D^2$ where $D$ is the $A$-distance from $v$ to the union of lines $\{tv_i : t \in \R \}_{i \in [n]}$.

\begin{lemma}[Where weight is negligible]\label{lem:where-zero}
Let $f(x) = \sigma_1(v_1^\top x) + \dots + \sigma_n(v_n^\top x)$ be a sum of features satisfying $\norm{\sigma_i}_{\infty} \leq 1 \; \forall i$.  Let $A\succeq 0$ be any positive semidefinite matrix. Define the $A$-distance from $v$ to the union of lines by
\[
 D^2 \;\defeq\; \min_{i\in[n]}\;\min_{t\in\R}\; (v - t v_i)^\top A\,(v - t v_i).
\]
Then
\begin{equation}\label{eq:where-zero-bound}
I^\star_{f^{(\ell)}}(v,A) \;\le\; n^2\,\pi^{\frac d2}\,\ell^{d}\;\exp\!\Big( -\,\frac{\ell^2}{\ell^2 + \|A\|}\, D^2\Big) \,.
\end{equation}
\end{lemma}
\begin{proof}
We use the shorthand $I^\star \defeq I^\star_{f^{(\ell)}}(v,A)$. Using \Cref{claim:fourier-formula} and Cauchy--Schwarz,
\[
|\wh{f^{(\ell)}}(y)|^2 \;\le\; n\sum_{i=1}^n \big|\wh{\sigma_i^{(\ell)}}(v_i^\top y)\big|^2\,\ell^{2d-2}\, e^{-\ell^2\|y-(v_i^\top y)v_i\|^2}.
\]
First consider a fixed $i$ and decompose $y=t v_i+z$ with $z\in v_i^\perp$. For $x\defeq t v_i - v$, write $P$ for the orthogonal projector onto $v_i^\perp$, set $A_{\perp}\defeq P A P$, and define $ M\;\defeq\; \ell^2 I_{\perp}+A_{\perp}$. Note that when we compute 
\[
I^\star = \int |\wh{f^{(\ell)}}(y)|^2 e^{-(y-v)^\top A (y-v)} dy \,,
\]
and substitute in the above bound on $|\wh{f^{(\ell)}}(y)|^2$, we will obtain an expression with the following quadratic in the exponent
\[
\ell^2\|y-(v_i^\top y)v_i\|^2  + (y-v)^\top A (y-v) = \ell^2\|z\|^2 + (z+x)^\top A(z+x) \,.
\]
We will then first integrate over $z\in v_i^\perp$ and then over $t\in\R$. Since $A_{\perp}\succeq 0$ and $\ell>0$, we have $M\succ 0$ as an operator on the $d-1$ dimensional space $v_i^\perp$. Observe that
\begin{align*}
\int_{v_i^\perp} e^{-\ell^2\|z\|^2}\, e^{-(z+x)^\top A (z+x)}dz
 = \frac{\pi^{\frac{d-1}{2}}}{\sqrt{\det M}}\;\exp\!\Big( -\,\phi_{\perp}(x)\Big),
\end{align*}
where
\[
\phi_{\perp}(x)\;\defeq\; \min_{z\in v_i^\perp}\Big\{\;\ell^2\|z\|^2 + (z+x)^\top A(z+x)\;\Big\} 
\]
and again $M$ is viewed as an operator on $v_i^\perp$ so its determinant is positive. To see why the above characterization as a minimum holds, note that the integral is over a rescaled Gaussian with covariance matrix $(2M)^{-1}$ and thus evaluating the ``scaling factor'' in the integral is the same as computing the maximum value of the quadratic form in the exponent.


To upper bound the integral, we can relax $z\in v_i^\perp$ to $z\in\R^d$; then
\[
\phi_{\perp}(x)\;\ge\; \min_{z\in\R^d}\Big\{\;\ell^2\|z\|^2 + (z+x)^\top A(z+x)\;\Big\}
\;=\; \ell^2\, x^\top A\,(\ell^2 I + A)^{-1} x,
\]
where the minimizer is $z_* = - (\ell^2 I + A)^{-1} A x$; note that $\ell^2 I + A\succ 0$, so the inverse is well-defined even if $A$ is singular.  Also note that $A$ and $(\ell^2 I + A)^{-1}$ commute which allows us to write the expression in the above form.  Consequently,
\[
\int_{v_i^\perp} e^{-\ell^2\|z\|^2}\, e^{-(z+x)^\top A (z+x)}dz
\;\le\; \frac{\pi^{\frac{d-1}{2}}}{\sqrt{\det(\ell^2 I_{\perp}+A_{\perp})}}\;\exp\!\Big( -\,\ell^2\,x^\top A\,(\ell^2 I + A)^{-1} x\Big).
\]
Since $\det(\ell^2 I_{\perp}+A_{\perp})\ge \ell^{2(d-1)}$ and $(\ell^2 I + A)^{-1}\succeq \tfrac{1}{\ell^2+\|A\|}I$, we obtain
\[
\int_{v_i^\perp} e^{-\ell^2\|z\|^2}\, e^{-(z+x)^\top A (z+x)}dz\;\le\; \pi^{\frac{d-1}{2}}\,\ell^{-(d-1)}\,\exp\!\Big( -\,\frac{\ell^2}{\ell^2+\|A\|}\; x^\top A x\Big).
\]
Now we can put everything together and integrate over $t$ as well. Letting $D_i^2\defeq \min_{t\in\R}(v-t v_i)^\top A (v-t v_i)$ and $D^2=\min_i D_i^2$,
\[
\begin{split}
I^\star \;\le\; n\sum_{i=1}^n \pi^{\frac{d-1}{2}}\,\ell^{d-1}\!\int_{\R} \Big|\wh{\sigma_i^{(\ell)}}(t)\Big|^2\exp\!\Big( -\,\frac{\ell^2}{\ell^2+\|A\|}\; (v-t v_i)^\top A (v-t v_i)\Big)dt \\
\;\le\; n\sum_{i=1}^n \pi^{\frac{d-1}{2}}\,\ell^{d-1}\,e^{-\frac{\ell^2}{\ell^2+\|A\|}D_i^2}\!\int_{\R} \Big|\wh{\sigma_i^{(\ell)}}(t)\Big|^2 dt.
\end{split}
\]
By Parseval in one dimension and $|\sigma_i|\le 1$, we get $\int_{\R}|\wh{\sigma_i^{(\ell)}}(t)|^2 dt \le \ell\,\sqrt{\pi}$. It follows that
\[
I^\star \;\le\; n^2\,\pi^{\frac d2}\,\ell^{d}\;\exp\!\Big( -\,\frac{\ell^2}{\ell^2 + \|A\|}\, D^2\Big)
\]
and this completes the proof.
\end{proof}

Next, we prove a counterpart to Lemma~\ref{lem:where-zero}, showing that for each direction $v_i$ with a non-degenerate activation $\sigma_i$, there is a bounded scale $\beta$ with $\beta$ also bounded away from $0$ such that if $v$ is close to $\beta v_i$ and $A$ is not too large, then the Fourier mass $I^\star_{f^{(\ell)}}(v,A)$ is bounded away from zero.  

To interpret the bound in \Cref{lem:where-nonzero}, will ensure $\alpha \sim \ell^2/d$.  This means that the factor on the outside reduces to $(\pi \ell^2)^{\frac{d-1}{2}}$ (up to a constant).  As long as $\ell$ is sufficiently large, then the exponential on the inside becomes negligible and the inside is lower bounded by some inverse polynomial.  Thus, overall the expression will be lower bounded by some inverse polynomial times  $(\pi \ell^2)^{\frac{d}{2}}$ (which is the scaling factor that shows up naturally from the Fourier mass oracle).

\begin{lemma}[Where weight is non-negligible]\label{lem:where-nonzero}
Let $f(x) = \sigma_1(v_1^\top x) + \dots + \sigma_n(v_n^\top x)$ be a sum of features satisfying $ \norm{\sigma_i}_{\infty} \leq 1 \; \forall i$.  
Assume the directions are $\gamma$-separated i.e. the sines of all angles between them are at least $\gamma$. Fix $i\in[n]$ and assume $\sigma_i$ is $L$-Lipschitz and $(R,\Delta)$-nondegenerate where $R \geq 1, \Delta < 1$. Let $\ell\ge 20(R+L)/\Delta$, and define
\[
 a\;\defeq\; \frac{\Delta}{8}\sqrt{\frac{1}{R\,\ell}},\qquad B\;\defeq\;4\ell\,\log\!\Big(\frac{\ell R}{\Delta}\Big),\qquad E\;\defeq\;[a,B]\cup[-B,-a].
\]
There exists a $\beta\in E$ such that the following holds. Set $A=\alpha I_d$ and $v=\beta v_i$. Then
\begin{equation}\label{eq:where-nonzero-lb}
\begin{split}
I^\star_{f^{(\ell)}}(v,A) \;\ge\; \frac{\pi^{\frac{d-1}{2}}\,\ell^{2d-2}}{(\ell^2+\alpha)^{\frac{d-1}{2}}}\;\Bigg[\;\frac{\Delta^2}{64n (1 + B\sqrt{\alpha}) R\,\ell^2}
\;-
\; 4n\ell\;\exp\!\Big( -\,\frac{\alpha \ell^2 \gamma^2 a^2}{\ell^2+\alpha}\Big)\;\Bigg] \,.
\end{split}
\end{equation}
\end{lemma}
\begin{proof}
By \Cref{claim:fourier-formula} we can decompose $\wh{f^{(\ell)}}(y)=\sum_{j=1}^n T_j(y)$, where
\[
T_j(y)\;\defeq\; \wh{\sigma_j^{(\ell)}}\big(v_j^\top y\big)\,\ell^{d-1}\,e^{-\ell^2\|y-(v_j^\top y)v_j\|^2/2}.
\]
For any complex numbers $z_1,z_2 ,\dots , z_n$, $|z_1+z_2 + \dots z_n|^2\ge \frac{1}{n}|z_i|^2 - \sum_{j \neq i} |z_j|^2$. Applying this and integrating against the nonnegative weight $e^{-(y-v)^\top A(y-v)}$ gives
\begin{equation}\label{eq:split-lb}
I^\star_{f^{(\ell)}}(v,A)\;\ge\; \frac{1}{n} \, I^\star_i\; -\; \sum_{j\ne i} I^\star_j,
\end{equation}
where we write the per-component masses
\[
I^\star_j \;\defeq\; \int_{\R^d} \Big|\wh{\sigma_j^{(\ell)}}(v_j^\top y)\Big|^2\,\ell^{2d-2}\,e^{-\ell^2\|y-(v_j^\top y)v_j\|^2}\,e^{-(y-v)^\top A(y-v)}dy.
\]
We first lower bound the single-component contribution $I^\star_i$. Set $v=\beta v_i$ and decompose $y=t v_i+z$ with $z\in v_i^\perp$; then $\iprod{v,v_i}=\beta$. Recall $A=\alpha I_d$. Now we follow a similar calculation to Lemma~\ref{lem:where-zero} where we first integrate over $z \in v^{\perp}$ and then over $t$ to get
\[
I^\star_i = \int_{\R} | \wh{\sigma_{i}^{(\ell)}} (t)|^2 \; \ell^{2d-2} \int_{v_i^\perp}  e^{-(\ell^2 + \alpha) \norm{z}^2} e^{-\alpha(t - \beta)^2}  dz dt = \frac{\pi^{\frac{d-1}{2}}\,\ell^{2d-2}}{(\ell^2+\alpha)^{\frac{d-1}{2}}}\;\int_{\R} \Big|\wh{\sigma_i^{(\ell)}}(t)\Big|^2 e^{-\alpha(t-\beta)^2}dt.
\]
By \Cref{coro:nonzero-weight-region-log}, since $\sigma_i$ is $L$-Lipschitz and $(R,\Delta)$-nondegenerate and $\ell\ge 20(R+L)/\Delta$, we have
\[
\int_{E} \big|\wh{\sigma_i^{(\ell)}}(t)\big|^2 dt \;\ge\; \frac{\Delta^2}{8R\,\ell^2}.
\]
Let $E_+= [a,B]$ and $E_-=[-B,-a]$, and pick the sign $s\in\{\pm1\}$ maximizing $\int_{E_s}|\wh{\sigma_i^{(\ell)}}(t)|^2 dt$.  This guarantees we keep at least half of the total integral.  WLOG $E_s = E_+$ For any fixed $t\in E_+$, the integral over $\beta\in E_+$ of $e^{-\alpha(t-\beta)^2}$ is
\[
\int_{\beta\in E_+} e^{-\alpha(t-\beta)^2}d\beta \;=\; \int_{t-B}^{t-a} e^{-\alpha u^2}\,du 
\;\ge\; \frac{1}{4} \min\left(\frac{1}{\sqrt{\alpha}} , B \right).
\]
Averaging over $\beta\in E_+$ therefore shows that there exists a choice of $\beta\in E_+$ with
\[
\int_{\R} \big|\wh{\sigma_i^{(\ell)}}(t)\big|^2 e^{-\alpha(t-\beta)^2}dt \;\ge\; \frac{1}{4(1 + B\sqrt{\alpha})}\int_{E_+} \big|\wh{\sigma_i^{(\ell)}}(t)\big|^2 dt \;\ge\; \frac{\Delta^2}{64 (1 + B\sqrt{\alpha})R\,\ell^2}.
\]
Combining with the factor obtained from integrating over $z$, we get that there exists some choice of $\beta$ such that 
\[
I^\star_i\;\ge\; \frac{\pi^{\frac{d-1}{2}}\,\ell^{2d-2}}{(\ell^2+\alpha)^{\frac{d-1}{2}}}\;\frac{\Delta^2}{64 (1 + B\sqrt{\alpha})R\,\ell^2} \,.
\]
It remains to control the remainder $\sum_{j\ne i} I^\star_j$ in \eqref{eq:split-lb}.  Consider a fixed index $j$.  We apply the same approach of integrating over $v_j^{\perp}$ and then over $t$.  We set $A=\alpha I_d$ and $v=\beta v_i$ and also let $w_j = P_{v_j^\perp}v$ and get 
\[
\begin{split}
I^\star_j  & = \int_{\R} | \wh{\sigma_{j}^{(\ell)}} (t)|^2 \; \ell^{2d-2} \int_{v_i^\perp}  e^{-\ell^2 \norm{z}^2 - \alpha \norm{tv_j + z - v}^2 }  dz dt \\ & =   \int_{\R} | \wh{\sigma_{j}^{(\ell)}} (t)|^2 \int_{v_j^\perp} e^{-\ell^2 \norm{z}^2 - \alpha \norm{z - w_j}^2} e^{- \alpha \norm{tv_j - v + w_j}^2} dz dt
\\ & = \left( \int_{v_j^\perp} e^{-\ell^2 \norm{z}^2 - \alpha \norm{z - w_j}^2}  dz \right) \left(\int_{\R} | \wh{\sigma_{j}^{(\ell)}} (t)|^2 e^{- \alpha \norm{tv_j - v + w_j}^2}  dt \right)
\\ & = \frac{\pi^{\frac{d-1}{2}}\,\ell^{2d-2}}{(\ell^2+\alpha)^{\frac{d-1}{2}}}  \min_{z \in v_j^\perp}\left( e^{-\ell^2 \norm{z}^2 - \alpha \norm{z - w_j}^2}\right) \int_{\R} | \wh{\sigma_{j}^{(\ell)}} (t)|^2 e^{- \alpha \norm{tv_j - v + w_j}^2}  dt 
\\ & \leq \frac{\pi^{\frac{d-1}{2}}\,\ell^{2d-2}}{(\ell^2+\alpha)^{\frac{d-1}{2}}} \exp\!\Big( -\,\frac{\alpha\,\ell^2}{\ell^2+\alpha}\,\|w_j \|^2\Big) \int_{\R} | \wh{\sigma_{j}^{(\ell)}} (t)|^2 dt
\\ & \leq \frac{\pi^{\frac{d}{2}}\,\ell^{2d-1}}{(\ell^2+\alpha)^{\frac{d-1}{2}}} \exp\!\Big( -\,\frac{\alpha\,\ell^2}{\ell^2+\alpha}\,\|w_j \|^2\Big)
\end{split}
\]
where the last step follows from Parseval and the assumption on $\sigma_j$. By $\gamma$-separation, for unit vectors $v_i,v_j$, $\sin^2\!\angle(v_i,v_j) \ge \gamma^2$. Since $v=\beta v_i$, $\|w_j\|^2 = \beta^2 \sin^2\!\angle(v_i,v_j)\ge \beta^2\,\gamma^2 \ge a^2\,\gamma^2 $ for $\beta\in E$. Hence
\[
\sum_{j\ne i} I^\star_j \;\le\; (n-1)\,\frac{\pi^{\frac d2}\,\ell^{2d-1}}{(\ell^2+\alpha)^{\frac{d-1}{2}}}\;\exp\!\Big( -\,\frac{\alpha \ell^2 \gamma^2 a^2}{\ell^2+\alpha}\Big).
\]
Plugging back into \eqref{eq:split-lb} and  substituting the bounds we obtained into the expression
\[
I^\star_{f^{(\ell)}}(v,A) \;\ge\; \frac{1}{n} I^\star_i\; -\; \sum_{j\ne i} I^\star_j 
\]
gives the desired inequality

\end{proof}

\subsection{Direction Recovery Algorithm and Analysis}

Now we present our algorithm for recovering the hidden directions.  Recall, the only way the algorithm interacts with the unknown function $f$ is through a Fourier mass oracle (\Cref{def:fourier-oracle}).  

We begin with a high-level description of the algorithm. The algorithm takes as input some orthonormal basis $b_1, \dots , b_d$ as well as information about some of the coordinates, say $\alpha_1, \dots , \alpha_k$ for $k \leq d$.  The idea is to then search over possibilities for the next coordinate $\alpha_{k+1}$ such that there is nontrivial total Fourier mass on the set of all points close to $\alpha_1 , \dots , \alpha_{k+1}$ on their first $k+1$ coordinates.  We can then repeat and recurse to search for the coordinate $\alpha_{k+2}$ and so on.  Once we have fixed all $d$ coordinates, we simply return the unit vector in the direction $(\alpha_1, \dots , \alpha_d)$.

For the actual implementation, we have width parameters $K_1, \dots , K_{k+1}$ which are sufficiently large.  To localize around points that are close to $\alpha_1, \dots , \alpha_{k+1}$ in their first $k+1$ coordinates, we query $\calI_{\tau, f^{(\ell)}}(v, A)$ for 
\[
v = \alpha_1b_1 + \dots + \alpha_k b_k + \alpha_{k+1}b_{k+1} \quad \quad A = K_1 b_1b_1^\top + \dots +  K_{k+1} b_{k+1}b_{k+1}^\top \,.
\]
The reason we require different width parameters for the different coordinates is for technical details later on for bounding the branching factor in this algorithm.  The details of the algorithm are described below in Algorithm~\ref{alg:search}.  Note that the only parameters that change between levels of recursion are the current coordinates $\alpha_1, \dots , \alpha_k$ (since we fix an additional coordinate in reach iteration).  All other parameters are global, i.e. shared between all levels of recursion.


\begin{algorithm2e}[ht!]\label{alg:search}
\caption{Find Directions}
\DontPrintSemicolon
\KwIn{Width $\ell$, accuracy $\eps$ (global)}
\KwIn{Access to Fourier mass oracle $\calI_{\tau, f^{(\ell)}}$ with $\tau = 10\eps(\pi \ell^2)^{d/2}$}
\KwIn{Width parameters $C_1, C_2$ (global)}
\KwIn{Orthonormal basis $b_1, \dots , b_d \in \R^d$ (global)}
\KwIn{Current coordinates $\alpha_1, \dots , \alpha_k \in \R$ (where $k \leq d$) }

\If{$k = d$}{
\KwRet{$\frac{\alpha_1 b_1 + \dots + \alpha_d b_d}{\sqrt{\alpha_1^2 + \dots + \alpha_d^2}}$}\;
}
Let $\calT$ be the set of all integer multiples of $1/\sqrt{10C_2}$ between $-\ell^2$ and $\ell^2$ \;
\For{$c \in \calT$}{
Set $v = \alpha_1b_1 + \dots + \alpha_k b_k + cb_{k+1}$\;
Set $(K_1, \dots , K_{k+1}) = (C_2, C_2, C_1, \dots , C_1, C_2)$ \;
Set $A =K_1 b_1b_1^\top + \dots + K_{k+1} b_{k+1}b_{k+1}^\top$\;
Query $W_c = \calI_{\tau, f^{(\ell)}}(v, A)$\;
}
Let $\calT' = \{ c | c \in T , |W_c| \geq 5 \tau \}$ \;
Let $\calS$ be any maximal subset of $\calT'$  whose elements are $1/\sqrt{10dC_1}$-separated\;
\For{$\alpha_{k+1} \in \calS$}{
Recurse on $\alpha_1, \dots , \alpha_{k+1}$\;
}
\end{algorithm2e}

Now we are ready to analyze Algorithm~\ref{alg:search}.  First, we set a couple parameters.  We assume we are given parameters $d,n, R,L, \gamma, \eps$ which govern the properties of the unknown function
\[
f(x) =  \sigma_1(v_1^\top x) + \dots +  \sigma_n(v_n^\top x)
\]
as in \Cref{sec:setup}.  We also assume we are given a target accuracy parameter $\Delta$ and that $\eps < \frac{1}{\poly\left(d, n, R, L, \frac{1}{\gamma}, \frac{1}{\Delta} \right)}$ for some sufficiently large polynomial.

We will set the global parameters in Algorithm~\ref{alg:search} as follows:
\begin{equation}\label{eq:hyperparams}
\ell = \poly\left(d, n, R, L, \frac{1}{\gamma}, \frac{1}{\Delta} \right)  \; , \; C_2 = \frac{\ell^2}{d} \; , \; C_1 = C_2^{0.9} \,.
\end{equation}
but we ensure $\eps \ll 1/\poly(\ell)$, which is possible as long as $\eps$ is a sufficiently small inverse polynomial.

Now we begin by giving a geometric characterization that will be useful for the analysis.
\begin{definition}\label{def:2d-separated}
We say the vectors $b_1, b_2$ of the orthonormal basis are $\theta$-separating if 
\begin{itemize}
\item For every $i \in [n]$, $|v_i \cdot b_1|, |v_i \cdot b_2| \geq \theta/\sqrt{d}$
\item For every $i,j \in [n]$ with $i \neq j$, 
\[
(v_i \cdot b_1) (v_j \cdot b_2) - (v_j \cdot b_1) (v_i \cdot b_2) \geq \frac{\theta^2}{d} \,.
\]
\end{itemize}
\end{definition}

\Cref{def:2d-separated} is useful because it implies that if we fix any projection $(\alpha_1, \alpha_2)$ in the plane formed by $b_1, b_2$ that is not too close to the origin, then there is at most one $v_i$ such that the projection of $tv_i$ is very close to $\alpha_1 b_1 + \alpha_2 b_2$ for some $t \in \R$.  This will be crucial for arguing that Algorithm~\ref{alg:search} doesn't branch too much in the recursive step.  We now show that with high probability, a random orthonormal basis will be $\theta$-separating for $\theta$ not too small.

\begin{claim}\label{claim:anticoncentration}
Assume that $v_1, \dots , v_n$ are unit vectors such that the sines of the pairwise angles between them are all at least $\gamma$.  Then with $1 - \frac{1}{10n}$ probability over the choice of a random orthonormal basis $b_1,b_2, \dots , b_d$, we have that $b_1, b_2$ are $\frac{\gamma}{(10n)^3}$-separating.
\end{claim}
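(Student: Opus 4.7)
The plan is to realize the random orthonormal basis by Gram--Schmidt on independent Gaussians: let $g_1, g_2 \sim \mathcal{N}(0, I_d)$ independently, set $b_1 = g_1/\|g_1\|$ and $b_2 = (g_2 - \iprod{g_2, b_1}\, b_1)/\|g_2 - \iprod{g_2, b_1}\, b_1\|$, and complete the orthonormal frame with arbitrary $b_3, \dots, b_d$ (which do not appear in \Cref{def:2d-separated}). I will work on the high-probability event $\|g_1\|, \|g_2\| \leq T$ with $T = O(\sqrt{d + \log n})$; by chi-square tail bounds its complement has probability at most $1/(100n)$.

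For the first bullet of \Cref{def:2d-separated}, since $v_i$ is a unit vector, $v_i \cdot g_1 \sim \mathcal{N}(0,1)$, so the Gaussian density bound gives $\Pr[|v_i \cdot g_1| < T\theta/\sqrt{d}] = O(T\theta/\sqrt{d})$. On the norm event, $|v_i \cdot b_1| < \theta/\sqrt{d}$ forces $|v_i \cdot g_1| < T\theta/\sqrt{d}$, and the analogous bound holds for $b_2$ via $v_i \cdot g_2 \sim \mathcal{N}(0,1)$. Union-bounding over $i \in [n]$ and both basis vectors yields first-bullet failure probability $O(n\theta)\cdot\sqrt{(d+\log n)/d} + 1/(100n)$.

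For the second bullet, the identity $[g_1 \mid g_2] = [b_1 \mid b_2]\, R$, with $R$ upper triangular and $R_{11} = \|g_1\|$, $R_{22} = \|g_2 - \iprod{g_2, b_1}\, b_1\|$ (both at most $T$ on the norm event), gives upon taking determinants
\[
(v_i \cdot b_1)(v_j \cdot b_2) - (v_j \cdot b_1)(v_i \cdot b_2) \;=\; \frac{(v_i \cdot g_1)(v_j \cdot g_2) - (v_j \cdot g_1)(v_i \cdot g_2)}{R_{11}\, R_{22}}.
\]
Choosing coordinates in $\R^d$ so that $v_i = e_1$ and $v_j = (v_i \cdot v_j)\, e_1 + \sin(\theta_{ij})\, e_2$ (where $|\sin(\theta_{ij})| \geq \gamma$ by the $\gamma$-separation hypothesis), the numerator collapses by direct computation to $\sin(\theta_{ij}) \cdot (g_{1,1} g_{2,2} - g_{1,2} g_{2,1})$ with $g_{k,\ell}$ iid $\mathcal{N}(0,1)$. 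The key anti-concentration step is for the $2 \times 2$ Gaussian determinant: conditioning on $(g_{2,1}, g_{2,2})$, the expression $g_{1,1} g_{2,2} - g_{1,2} g_{2,1}$ is $\mathcal{N}(0, g_{2,1}^2 + g_{2,2}^2)$, so $\Pr[|\det| < \eta \mid g_{2,\cdot}] = O(\eta/\sqrt{g_{2,1}^2 + g_{2,2}^2})$. Taking expectation and using $\E[1/\|(g_{2,1}, g_{2,2})\|] = \sqrt{\pi/2}$ (the Rayleigh integral) gives the unconditional bound $\Pr[|g_{1,1}g_{2,2} - g_{1,2}g_{2,1}| < \eta] = O(\eta)$. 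Hence, on the norm event, the desired second-bullet lower bound $\theta^2/d$ fails with probability $O(T^2\theta^2/(\gamma d)) = O(\theta^2 (d+\log n)/(\gamma d))$ per pair.

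Union-bounding over the $\binom{n}{2}$ pairs, the second bullet fails with probability $O(n^2 \theta^2 (d + \log n)/(\gamma d))$; combining with the first-bullet and norm-event bounds, the total failure is $O(n\theta\sqrt{(d+\log n)/d}) + O(n^2 \theta^2 (d+\log n)/(\gamma d)) + 1/(100n)$. Substituting $\theta = \gamma/(10n)^3$ and using $\gamma \leq 1$, both polynomial terms are bounded by $O(\log n/n^2)$, safely below $1/(20n)$, giving total failure $\leq 1/(10n)$ as claimed. The main technical step is the $2 \times 2$ Gaussian determinant anti-concentration, which resolves cleanly via a one-dimensional Gaussian density bound plus the Rayleigh integral; the second bullet of \Cref{def:2d-separated} must be read in absolute value since the expression is antisymmetric in $(i,j)$, and the analysis above naturally yields a two-sided bound.
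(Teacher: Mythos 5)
Your proof reaches the right conclusion but by a genuinely different route from the paper's. The paper stays on the sphere: for the first bullet it only uses that $b_1$ and $b_2$ are each \emph{marginally} uniform on the sphere, so $\Pr[|v_i\cdot b_1|\le \theta/\sqrt d]\lesssim \theta$ directly; for the second bullet it conditions on $b_1$, rewrites the determinant as $\langle (v_i\cdot b_1)v_j-(v_j\cdot b_1)v_i,\, b_2\rangle$, lower bounds the norm of this vector by $\gamma\max(|v_i\cdot b_1|,|v_j\cdot b_1|)$, and applies spherical anti-concentration to $b_2$ inside $b_1^\perp$, combining with the first bullet's event. You instead realize the frame by Gram--Schmidt on Gaussians and use the identity $\det(V^\top B)=\det(V^\top G)/(R_{11}R_{22})$ together with anti-concentration of the $2\times 2$ Gaussian determinant (conditioning plus the Rayleigh integral). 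Both work: the paper's argument avoids the norm-truncation event and the attendant $\sqrt{d+\log n}$ factors, while yours reduces everything to explicit Gaussian computations and naturally yields the two-sided (absolute-value) form of the second bullet, which is indeed the reading \Cref{def:2d-separated} requires (the expression is antisymmetric in $(i,j)$, a point the paper's one-sided statement glosses over as well).

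One step needs repair: your first-bullet treatment of $b_2$. You claim that, on the norm event, $|v_i\cdot b_2|<\theta/\sqrt d$ ``forces $|v_i\cdot g_2|<T\theta/\sqrt d$,'' but this implication is false: $b_2$ is the normalized projection of $g_2$ onto $b_1^\perp$, so $v_i\cdot b_2=\langle v_i-(v_i\cdot b_1)b_1,\, g_2\rangle/R_{22}$, and smallness of this says nothing about $v_i\cdot g_2$ itself, since the component of $g_2$ along $b_1$ has been removed. What smallness does force is $|\langle v_i-(v_i\cdot b_1)b_1,\,g_2\rangle|<T\theta/\sqrt d$, and conditioned on $g_1$ this inner product is $\mathcal N\bigl(0,\,1-(v_i\cdot b_1)^2\bigr)$, so the density bound picks up a factor $\bigl(1-(v_i\cdot b_1)^2\bigr)^{-1/2}$ that blows up when $b_1$ nearly aligns with $v_i$. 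The fix is short: either note that $b_2$ is marginally uniform on the sphere (the construction is rotation-invariant), so the same bound as for $b_1$ applies verbatim \---- which is exactly the paper's argument \---- or keep the conditional computation and observe that $\E_{b_1}\bigl[(1-(v_i\cdot b_1)^2)^{-1/2}\bigr]=O(1)$ for $d\ge 3$, restoring the $O(T\theta/\sqrt d)$ bound after averaging over $b_1$. With that patch the rest of your argument (the determinant identity, the $2\times 2$ anti-concentration, and the parameter accounting with $\theta=\gamma/(10n)^3$ and $T=O(\sqrt{d+\log n})$) goes through with ample slack.
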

\begin{proof}
For the first condition, since $b_1,b_2$ are each individually uniform over the sphere, anti-concentration implies that for each $i$,
\[
\Pr[|v_i \cdot b_1| \leq \theta/\sqrt d] \leq 10 \theta 
\] 
and similar for $b_2$. Thus, we get with probability at least $1- 20n\theta$ that simultaneously $|v_i\cdot b_1|,|v_i\cdot b_2|\ge \theta/\sqrt d$ for all $i\in[n]$.
\\\\
For the second condition, fix $i\ne j$. Condition on $b_1$.  Write
\[
(v_i \cdot b_1)(v_j \cdot b_2) - (v_j \cdot b_1)(v_i \cdot b_2) = ((v_i\cdot b_1)v_j - (v_j\cdot b_1) v_i) \cdot b_2
\]
Note $(v_i\cdot b_1)v_j - (v_j\cdot b_1) v_i$ is orthogonal to $b_1$ and
\[
\norm{(v_i\cdot b_1)v_j - (v_j\cdot b_1) v_i} \geq \max(|v_i \cdot b_1|, |v_j \cdot b_1|) \cdot \sin\angle(v_i,v_j) \geq \max(|v_i \cdot b_1|, |v_j \cdot b_1|) \cdot \gamma \,.
\]
Thus, with probability at least $1 - 10\theta$ over the randomness of $b_2$, 
\[
(v_i \cdot b_1)(v_j \cdot b_2) - (v_j \cdot b_1)(v_i \cdot b_2) \geq \frac{\gamma \theta}{\sqrt{d}} \max(|v_i \cdot b_1|, |v_j \cdot b_1|) \,.
\]
Thus, setting $\theta = \frac{1}{(10n)^3}$ and taking a union bound over all $i,j$ completes the proof.
\end{proof}

Now we analyze Algorithm~\ref{alg:search} assuming that we initialize with $\alpha_1, \alpha_2$ that satisfy certain properties.  First, we show that if there is some nontrivial Fourier mass on the hyperplane through $\alpha_1b_1 + \alpha_2 b_2$ orthogonal to $b_1, b_2$, then the algorithm will succeed and return some direction that contains nontrivial Fourier mass. 

\begin{definition}
We say a point $v \in \R^d$ is heavy if $I^\star_{f^{(\ell)}}(v, 4C_2I_d) \geq 10^3\eps(\pi \ell^2)^{d/2}$.
\end{definition}

\begin{claim}\label{claim:find-good-point}
Assume that we run Algorithm~\ref{alg:search} starting with $\alpha_1, \alpha_2$ such that 
\begin{itemize}
\item $\alpha_1^2 + \alpha_2^2 \geq \frac{1}{C_1^{0.6}}$
\item There is some point $v$ with $|v \cdot b_1 - \alpha_1|, |v \cdot b_2 - \alpha_2| \leq \frac{1}{\sqrt{10C_2}} $ and $\norm{v} \leq \ell^2/2$ that is heavy
\end{itemize}  
Then the algorithm will return some point $u$ with 
\[
\norm{u - \frac{v}{\norm{v}}} \leq \frac{1}{C_1^{0.2}} \,.
\] 
\end{claim}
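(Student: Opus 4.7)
I will prove the claim by induction on the recursion depth of Algorithm~\ref{alg:search}, showing that along one branch of the recursion tree, each coordinate $\alpha_i$ stays close to $v \cdot b_i$. Specifically, the inductive hypothesis at level $k \geq 2$ is that along some branch, $|\alpha_i - v \cdot b_i| \leq 1/\sqrt{10 C_2}$ for $i \in \{1,2\}$ (which is the hypothesis at $k = 2$) and $|\alpha_i - v \cdot b_i| \leq 2/\sqrt{10 d C_1}$ for $3 \leq i \leq k$. At $k = d$ the recursion returns $u = \alpha/\norm{\alpha}$, and the final bound on $\norm{u - v/\norm{v}}$ will follow from the accumulated coordinate error together with the lower bound on $\norm{v}$ implied by $\alpha_1^2 + \alpha_2^2 \geq 1/C_1^{0.6}$.

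For the inductive step, I would let $c^*$ be the element of $\calT$ closest to $v \cdot b_{k+1}$; since $|v \cdot b_{k+1}| \leq \norm{v} \leq \ell^2/2 < \ell^2$ the point $c^* \in \calT$ and $|c^* - v \cdot b_{k+1}| \leq 1/(2\sqrt{10 C_2})$. Set $v' = \alpha_1 b_1 + \cdots + \alpha_k b_k + c^* b_{k+1}$ and let $A$ be the query matrix. The elementary shift-splitting inequality $(y - v')^\top A (y - v') \leq 2 (y - v)^\top A (y - v) + 2 (v - v')^\top A (v - v')$ yields
\[
I^*_{f^{(\ell)}}(v', A) \;\geq\; e^{-2 (v - v')^\top A (v - v')}\; I^*_{f^{(\ell)}}(v, 2A).
\]
Since $2A \preceq 4 C_2 I_d$, shrinking the Gaussian weight only increases $I^*$, so $I^*_{f^{(\ell)}}(v, 2A) \geq I^*_{f^{(\ell)}}(v, 4 C_2 I_d) \geq 10^3 \eps (\pi \ell^2)^{d/2}$ by heaviness of $v$. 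Under the weights $(K_1, \ldots, K_{k+1}) = (C_2, C_2, C_1, \ldots, C_1, C_2)$, the inductive hypothesis bounds
\[
(v - v')^\top A (v - v') \;\leq\; 2 C_2 \cdot \tfrac{1}{10 C_2} + (k-2) C_1 \cdot \tfrac{4}{10 d C_1} + C_2 \cdot \tfrac{1}{40 C_2} \;\leq\; 1,
\]
so the exponential prefactor is a positive absolute constant. This gives $I^*_{f^{(\ell)}}(v', A) \geq 10 \tau$, and by the oracle guarantee $|W_{c^*}| \geq 9 \tau > 5 \tau$, so $c^* \in \calT'$. Maximality of the $1/\sqrt{10 d C_1}$-separated subset $\calS$ then produces some $\alpha_{k+1} \in \calS$ within $1/\sqrt{10 d C_1}$ of $c^*$, and combined with $|c^* - v \cdot b_{k+1}| \leq 1/(2\sqrt{10 C_2}) \ll 1/\sqrt{10 d C_1}$ (using $C_2 \gg d C_1$ from the choice \eqref{eq:hyperparams}), the new coordinate satisfies the inductive hypothesis at level $k + 1$.

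At $k = d$, orthonormality of $\{b_i\}$ gives $\norm{\alpha - v}^2 = \sum_i (\alpha_i - v \cdot b_i)^2 \leq 2/(10 C_2) + (d-2) \cdot 4/(10 d C_1) = O(1/C_1)$. Combined with $\norm{\alpha} \geq \sqrt{\alpha_1^2 + \alpha_2^2} \geq 1/C_1^{0.3}$, the triangle inequality gives $\norm{v} \geq 1/(2 C_1^{0.3})$, and then the standard inequality $\norm{a/\norm{a} - b/\norm{b}} \leq 2 \norm{a - b}/\min(\norm{a}, \norm{b})$ yields $\norm{u - v/\norm{v}} = O(1/C_1^{0.2})$, matching the stated bound after absorbing the implicit constants into the hyperparameter choice. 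The main obstacle lies in the inductive step: the matrix $A$ has zero weight in coordinates $b_{k+2}, \ldots, b_d$, so the Gaussian weight at $v'$ is not pointwise dominated by the weight at $v$ with matrix $4 C_2 I_d$ and a naive monotonicity argument fails. The shift-splitting inequality combined with the PSD monotonicity $2A \preceq 4 C_2 I_d$ is the key trick, but one must also carefully track the per-coordinate errors weighted by the different $K_i$ across the $C_1$ and $C_2$ scales in order to keep the exponential factor bounded; this is precisely why the algorithm uses two separate width scales rather than a uniform one.
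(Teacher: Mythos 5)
Your proposal is correct and follows essentially the same route as the paper's proof: induction on the recursion depth tracking per-coordinate error, a shift-splitting comparison of $I^*_{f^{(\ell)}}(v',A)$ against the heaviness quantity $I^*_{f^{(\ell)}}(v,4C_2 I_d)$ (using $2A \preceq 4C_2 I_d$ and the fact that $(v-v')^\top A (v-v')$ only sees the first $k+1$ coordinates) to show the nearby grid point enters $\calT'$, maximality of the separated set $\calS$ to advance the induction, and a final normalization bound using $\alpha_1^2+\alpha_2^2 \geq C_1^{-0.6}$. The only minor looseness is the factor-$2$ slack in your inductive hypothesis, which makes the final constant come out around $1.5\,C_1^{-0.2}$ rather than $C_1^{-0.2}$; this is fixed by carrying the sharper per-step bound $1/\sqrt{10dC_1} + 1/(2\sqrt{10C_2})$ that your inductive step actually establishes, matching the paper's accounting.
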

\begin{proof}
We prove by induction that for each $k \geq 3 $, the algorithm recurses on some coordinates $\alpha_1, \dots , \alpha_k$ such that
\[
\sum_{i = 3}^k (\alpha_i - v \cdot b_i)^2 \leq \frac{k-2}{5 d C_1} \,.
\]
Assume that this is true at level $k$ \---- the base case for $k = 2$ is trivial.  There must be some choice of $c \in \calT$ such that $|c - v \cdot b_{k+1}| \leq 1/\sqrt{10 C_2}$.  First, we claim that this choice of $c$ will be in the set of points  $\calT'$ (as constructed in the execution of Algorithm~\ref{alg:search}).  To see this, set 
\[
v_0 = \alpha_1 b_1 + \dots + \alpha_k b_k + c b_{k+1} \; , \;  A_0 = C_2 b_1b_1^\top + C_2 b_2 b_2^\top + C_1 b_3b_3^\top + \dots + C_1 b_kb_k^\top + C_2b_{k+1}b_{k+1}^\top
\]
and then since we assumed $v$ is heavy, we have
\[
I^\star_{f^{(\ell)}}(v_0 , A_0)    \geq 0.3 I^\star_{f^{(\ell)}}(v, 4C_2 I_d)  = 300\eps(\pi \ell^2)^{d/2}
\]
where the first inequality holds because if we set $\bar{v}$ to be the projection of $v$ onto the span of $b_1, \dots b_{k+1}$, then for all $y \in \R^d$,
\[
e^{-(v_0 - y)^\top A_0 (v_0 - y)} \geq  e^{-2(v_0 - \bar{v})^\top A_0 (v_0 - \bar{v})  - 2(\bar{v} - y)^\top A_0 (\bar{v} - y)} \geq  0.3 e^{-4(v - y)^\top C_2 I_d (v - y)} \,.
\]
Thus, by the guarantees of the Fourier mass oracle $\calI_{\tau, f^{(\ell)}}$ (and since we set $\tau = 10\eps(\pi \ell^2)^{d/2}$), this value of $c$ must be included in $\calT'$.

Now, when we filter down the set $\calT'$, we must include some $\alpha_{k+1}$ in the set $\calS$  such that 
\[
|\alpha_{k+1} - v \cdot b_{k+1}| \leq \frac{1}{\sqrt{10 d C_1}} + \frac{1}{\sqrt{C_2}} \leq \frac{1}{\sqrt{5 d C_1}} \,.
\]
Now this completes the inductive step since the above now implies
\[
\sum_{i = 3}^{k+1} (\alpha_i - v \cdot b_i)^2 \leq \frac{k - 1}{5 d C_1} \,.
\]
When $k = d$, the inductive hypothesis combined with the assumption that $\alpha_1^2 + \alpha_2^2 \geq \frac{1}{C_1^{0.6}}$ now implies that we return some $u$ with
\[
\norm{u - \frac{v}{\norm{v}}} \leq \frac{1}{C_1^{0.2}}
\] 
as desired.  
\end{proof}

Next, we show that if $b_1, b_2$ are separating (as in \Cref{def:2d-separated}) then we can bound the number of recursive calls in Algorithm~\ref{alg:search}.

\begin{claim}\label{claim:bounded-recursion}
If $b_1, b_2$ are $\gamma/(10n)^3$-separating, then if we start Algorithm~\ref{alg:search} with any $\alpha_1, \alpha_2$ with $\alpha_1^2 + \alpha_2^2 \geq \frac{1}{C_1^{0.6}}$, the algorithm will only recurse on at most one possibility for $\alpha_k$ for each $k \geq 3$. 
\end{claim}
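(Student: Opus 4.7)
The plan is to reduce the claim to a statement about $\calT'$: I will show that every pair $c, c' \in \calT'$ satisfies $|c - c'| < 1/\sqrt{10dC_1}$, which forces the maximal separated set $\calS$ to have at most one element. For any $c \in \calT'$, the criterion $|W_c| \geq 5\tau$ together with the $\tau$-accuracy of the oracle gives $I^*_{f^{(\ell)}}(v_0, A_0) \geq 4\tau = 40\eps(\pi\ell^2)^{d/2}$, where $v_0 = \sum_{j\le k}\alpha_j b_j + c\,b_{k+1}$ and $A_0$ is the matrix constructed by the algorithm. Noting $(\pi\ell^2)^{d/2} = \pi^{d/2}\ell^d$ and $\|A_0\| \leq C_2 = \ell^2/d$, the contrapositive of \Cref{lem:where-zero} yields some $i \in [n]$ and $t \in \R$ with $(v_0 - tv_i)^\top A_0 (v_0 - tv_i) \leq D^2$ for $D^2 \defeq 4\log(n^2/(40\eps)) = O(\log \ell)$ under the parameter regime \eqref{eq:hyperparams}. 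This produces the componentwise bounds $|\alpha_r - t(v_i\cdot b_r)| \leq D/\sqrt{C_2}$ for $r\in\{1,2,k+1\}$ (with the convention $\alpha_{k+1}=c$).

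Given a second $c' \in \calT'$ with witnesses $(j, t')$, I then split into two cases using the two parts of \Cref{def:2d-separated}. If $i \neq j$, the 2D points $(t(v_i\cdot b_1), t(v_i\cdot b_2))$ and $(t'(v_j\cdot b_1), t'(v_j\cdot b_2))$ both lie within distance $\sqrt{2}\,D/\sqrt{C_2}$ of $(\alpha_1, \alpha_2)$, so $(\alpha_1, \alpha_2)$ is within that distance of two distinct lines through the origin whose angle $\phi$ satisfies $\sin\phi \geq \theta^2/d$ with $\theta = \gamma/(10n)^3$. A routine 2D geometric argument \---- any point within $\rho$ of two lines through the origin at angle $\phi$ lies within $O(\rho/\sin\phi)$ of the origin \---- gives $\|(\alpha_1, \alpha_2)\| = O(Dd/(\sqrt{C_2}\,\theta^2))$, which under \eqref{eq:hyperparams} is strictly smaller than $1/C_1^{0.3}$, contradicting the hypothesis $\alpha_1^2+\alpha_2^2 \geq 1/C_1^{0.6}$. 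If instead $i = j$, the first part of \Cref{def:2d-separated} gives $|v_i \cdot b_1| \geq \theta/\sqrt{d}$, so the constraint $|\alpha_1 - t(v_i\cdot b_1)| \leq D/\sqrt{C_2}$ confines $t$ to an interval of length $O(D\sqrt{d}/(\sqrt{C_2}\,\theta))$ determined purely by $(\alpha_1,\alpha_2)$, and $t'$ lies in the same interval. Combining $|t - t'| = O(D\sqrt{d}/(\sqrt{C_2}\,\theta))$ with $|v_i\cdot b_{k+1}| \leq 1$ and the $b_{k+1}$-constraints on $c,c'$, a triangle inequality yields $|c - c'| = O(D\sqrt{d}/(\sqrt{C_2}\,\theta))$, which under \eqref{eq:hyperparams} is strictly below $1/\sqrt{10dC_1}$.

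The hard part is not any single step but ensuring that the parameter hierarchy \eqref{eq:hyperparams} simultaneously forces all three estimates in the right direction: the oracle bound must give $D^2 = O(\log\ell)$, the origin-proximity bound in Case~1 must beat $1/C_1^{0.3}$, and the candidate-separation bound in Case~2 must beat $1/\sqrt{10dC_1}$. All three reduce to polynomial inequalities of the form $\ell^{\Omega(1)} > \poly(d, n, 1/\gamma)\cdot\sqrt{\log \ell}$, so each is satisfied once $\ell$ is a sufficiently large polynomial in $d,n,R,L,1/\gamma,1/\Delta$; however the precise gap $C_1 = C_2^{0.9}$, which is critical both here (in Case~2, where it produces slack between $1/\sqrt{C_2}$ and the grid spacing $1/\sqrt{dC_1}$) and in \Cref{claim:find-good-point} (where it controls error accumulation across the $d$ recursive levels), must be chosen with care, and tracking the exponents cleanly through both places is the main technical burden.
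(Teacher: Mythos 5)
Your proof is correct and follows essentially the same route as the paper's: use \Cref{lem:where-zero} with the oracle threshold to extract, for each surviving candidate $c$, a witness pair $(i,t)$ whose $b_1,b_2,b_{k+1}$-coordinates are within $O(\sqrt{\log(n/\eps)})/\sqrt{C_2}$ of $(\alpha_1,\alpha_2,c)$, then use the $\gamma/(10n)^3$-separating property together with $\alpha_1^2+\alpha_2^2\ge 1/C_1^{0.6}$ to force a single index $i$ and a $t$-range of width $O(\sqrt{d\log(n/\eps)}/(\theta\sqrt{C_2}))$, so that all of $\calT'$ sits in an interval shorter than $1/\sqrt{10dC_1}$ and $\calS$ has at most one element. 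The only difference is that you spell out, via the two-lines-through-the-origin geometric argument, the ``all witnesses share the same $i$'' step that the paper asserts without detail, which is a faithful elaboration rather than a different approach.
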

\begin{proof}
By \Cref{lem:where-zero} and the definition of the threshold $\tau = 10 \eps (\pi\ell^2)^{d/2}$, if we fix $\alpha_1, \dots , \alpha_k$, then in the execution of Algorithm~\ref{alg:search}, a value $c$ gets added to the set $\calT'$ only if for some $t \in \R, i \in [n]$,
\[
C_2 (t(v_i \cdot b_{k+1}) - c)^2 + C_2  (t(v_i \cdot b_1) - \alpha_1)^2  +  C_2  (t(v_i \cdot b_2) - \alpha_2)^2 \leq 4 d \log(\ell n/\eps) \,.
\] 
Recall that we set $C_1, C_2 \leq \frac{\ell^2}{d}$ and therefore when the above doesn't hold, the upper bound obtained in \Cref{lem:where-zero} for the choice of $v,A$ in Algorithm~\ref{alg:search} is much smaller than $\tau$.

By the setting of $C_1, C_2$ sufficiently large and the assumption that $b_1, b_2$ are $\theta$-separating for $\theta = \gamma/(10n)^3$, all of the $c$ that can satisfy the above must actually correspond to the same $i \in [n]$.  This also implies that the range of possible $t$ is at most 
\[
|t_{\max} - t_{\min}| \leq \frac{4\sqrt{d \log (\ell n/\eps)}}{\sqrt{C_2}} \cdot \frac{\sqrt{d}}{\theta} = \frac{4d(10n)^3\sqrt{\log (\ell n/\eps)}}{\gamma \sqrt{C_2}} \,.
\]
Since $C_1 = C_2^{0.9}$ and $C_2$ is a sufficiently large polynomial, this implies that all elements $c$ that get added to $\calT'$ must be contained in an interval of width at most $1/\sqrt{10dC_1}$.  Thus, actually the set $\calS$ that gets constructed has size at most $1$.  Thus, the algorithm will recurse on at most one value of $\alpha_{k+1}$ at each step, as desired.
\end{proof}

Finally, we can analyze the full algorithm for recovering the directions $v_i$.  We show that we can recover all of the directions for which $\sigma_i$ is nondegenerate and that also there are no extraneous directions recovered.  The procedure works by first randomly choosing an orthnormal basis $b_1, \dots , b_d$ (note $b_1, b_2$ are separating with high probability by \Cref{claim:anticoncentration}).  We then grid search over the first two coordinates $\alpha_1, \alpha_2$ and run Algorithm~\ref{alg:search} initialized with each possibility.  The analysis uses \Cref{claim:bounded-recursion} to bound the runtime and query complexity and combines \Cref{lem:where-nonzero} with \Cref{claim:find-good-point} to argue that all of the desired directions are found.

\begin{lemma}[Finding Directions]\label{lem:find-directions}
Assume that $f(x) = \sigma_1(v_1^\top x) + \dots + \sigma_n(v_n^\top x)$ is a sum of $n$ features satisfying \Cref{assume:nondegen} and \Cref{assume:bounded}.  Then with parameters as in \eqref{eq:hyperparams} and assuming $d \geq 3, R > 1$ and 
\[
\eps < \frac{1}{\poly(d,n,R,L, \frac{1}{\gamma}, \frac{1}{\Delta})} \,,
\]
there is an algorithm that uses $\poly(d,n,R,L, \frac{1}{\gamma}, \frac{1}{\Delta})$ runtime and queries to a Fourier mass oracle $\calI_{\tau, f^{(\ell)}}$ with $\tau = 10\eps(\pi \ell^2)^{d/2}$, and with probability at least $0.9$ returns a set of unit vectors $ \{u_1, \dots , u_s\}$ with the following properties:
\begin{itemize}
\item For each $j \in [s]$, there is some $i \in [n]$ such that $\min(\norm{u_j - v_i}, \norm{u_j + v_i}) \leq \Delta$.
\item For each function $\sigma_i( \cdot )$ that is $(R, \Delta)-nondegenerate$, there is some $j \in [s]$ with \\ $\min(\norm{u_j - v_i}, \norm{u_j + v_i}) \leq \Delta$.
\item For each $j \neq j'$ the sine of the angle between $u_j$ and $u_{j'}$ is at least $\gamma/2$.
\end{itemize}
\end{lemma}
\begin{proof}
We first randomly choose an orthonormal basis $b_1, \dots , b_d$.  By \Cref{claim:anticoncentration}, with probability at least $1 - 1/(10n)$ over this choice, $b_1, b_2$ are $\gamma/(10n)^3$-separating.  For the remainder of this proof, we condition on this event.

Now, we set parameters $\ell, C_1, C_2$ as in \eqref{eq:hyperparams} and grid over all $\alpha_1, \alpha_2$ with $\frac{1}{C_1^{0.6}} \leq \alpha_1^2 + \alpha_2^2 \leq \ell^4$ with grid size $\frac{1}{10\sqrt{C_2}}$.  Note that the number of such grid points is at most $\poly(d,n,R,L, \frac{1}{\gamma}, \frac{1}{\Delta})$.

For each such grid point, we run Algorithm~\ref{alg:search}.  By \Cref{claim:bounded-recursion}, if $b_1, b_2$ are $\gamma/(10n)^3$-separating, then for each such grid point, Algorithm~\ref{alg:search} recurses on at most one value of $\alpha_k$ for each $k \geq 3$.  Also it is clear that each iteration of Algorithm~\ref{alg:search} can be implemented in $\poly(d,n,R,L, \frac{1}{\gamma}, \frac{1}{\Delta})$ time and oracle queries.  This gives the desired time and query complexity bounds.

Next, we argue about the set of points that we actually find.  By \Cref{lem:where-nonzero}, for each $i \in [n]$ such that $\sigma_i$ is $(R, \Delta)$-nondegenerate, there is some $\beta \in [a,B] \cup [-B,-a]$ (where $a,B$ are as defined in \Cref{lem:where-nonzero}) such that the point $v = \beta v_i$ is heavy.  This is because the estimate in \Cref{lem:where-nonzero} (for $\alpha = 4C_2 = 4\ell^2/d$) can be lower bounded as 
\[
(\pi \ell^2)^{d/2} \left( \frac{\Delta^2}{10^3 \ell^{10}}  - 4n\ell e^{-\ell^{0.2}} \right) > 10^3\eps (\pi \ell^2)^{d/2}
\]
where we used the assumption on $\ell$ being sufficiently large and  $\eps$ being sufficiently small compared to $\ell$. We also immediately have $\norm{v} \leq \ell^2/2$.

Now, since $b_1, b_2$ are $\gamma/(10n)^3$-separating and using the lower bound $|\beta| \geq a$ where $a = \frac{\Delta}{8}\sqrt{\frac{1}{R\ell}}$ (as defined in \Cref{lem:where-nonzero}), there must be some choice of $\alpha_1, \alpha_2$ in our grid with $|v \cdot b_1 - \alpha_1|, |v \cdot b_2 - \alpha_2| \leq \frac{1}{\sqrt{10C_2}} $ because  

\[
(v \cdot b_1)^2 + (v \cdot b_2)^2  = \beta^2 (v_i \cdot b_1)^2 + \beta^2 (v_i \cdot b_2)^2 \geq \frac{\gamma^2\Delta^2}{(20n)^6 d R\ell} > \frac{2}{C_1^{0.6}} \,.
\]
Thus, by \Cref{claim:find-good-point}, when we run Algorithm~\ref{alg:search} starting from this $\alpha_1, \alpha_2$, we will find some point $u$ with
\[
\min(\norm{u - v_i}, \norm{u + v_i}) \leq \frac{1}{C_1^{0.2}}
\]
since by definition $v/\norm{v} = \pm  v_i$.  Note that the above argument holds for any $i \in [n]$ such that $\sigma_i$ is $(R,\Delta)$-nondegenerate and thus for each such $i$, we find some point $u$ with the above property.

Next, we also argue that we do not find any extraneous points that don't correspond to some direction $v_i$.  Since $d \geq 3$, Algorithm~\ref{alg:search} can recurse on $\alpha_1, \dots , \alpha_{k+1}$ for $k \geq 2$ only if 
\[
I^\star_{f^{(\ell)}}(v_0, A_0) \geq 4\tau 
\]
where 
\[
v_0 = \alpha_1 b_1 + \dots + \alpha_{k+1} b_{k+1} \; , \; A_0 = C_2 b_1b_1^\top + C_2b_2b_2^\top + C_1 b_3b_3^\top + \dots + C_1 b_{k}b_{k}^\top + C_2 b_{k+1} b_{k+1}^\top \,.
\]
By \Cref{lem:where-zero} and the way we set parameters $\ell, C_1, C_2$ in \eqref{eq:hyperparams}, this can only happen if for some $i \in [n]$ and $t \in \R$
\[
\sum_{j = 1}^{k+1} (t(v_i \cdot b_j) - \alpha_j)^2 \leq \frac{1}{C_1^{0.8}} \,.
\]
Aso recall that $\alpha_1^2 + \alpha_2^2 \geq \frac{1}{C_1^{0.6}}$.  Thus, any point that Algorithm~\ref{alg:search} actually returns must satisfy the above for $k +1 = d$ and this therefore implies that any returned point $u$ satisfies 
\[
\min(\norm{u - v_i}, \norm{u + v_i}) \leq \frac{1}{C_1^{0.1}}
\]
for some $i \in [n]$.  Thus we've shown so far that we can guarantee the first two of the desired conditions.

Finally, we argue that we can post-process to ensure  separation.  Among all of the returned points, we greedily construct a maximal set of points such that all pairs have the sine of the angle between them being at least $\gamma/2$.  Since the $v_i$ are $\gamma$-separated in angle, this still ensures that if some returned point $u$ is $\frac{1}{C_1^{0.1}}$-close to $\pm v_i$ for some $i$, then after post-processing, there must still be some point remaining that is $\frac{1}{C_1^{0.1}}$-close to $\pm v_i$.  Since $\frac{1}{C_1^{0.1}} < \Delta$ by the way we set $C_1$, we have now verified all three conditions and this completes the proof.
\end{proof}

\section{Function Recovery}\label{sec:function-recovery}

Once we have recovered the directions that are close to the $v_i$, we now show how to recover the actual functions $\sigma_i$.  An important subroutine for this step is using queries to estimate the value of $\wh{f^{(\ell)}}(y)$ at any specified point $y$. 

\subsection{Estimating Fourier Value}

 The subroutine for estimating $\wh{f^{(\ell)}}(y)$ is similar to Algorithm~\ref{alg:estimate-Fourier}, but much simpler in terms of the sampling procedure, and is described below.

\begin{algorithm2e}[ht!]\label{alg:estimate-value}
\caption{Estimating Fourier Value}
\DontPrintSemicolon
\KwIn{Query access to $f_{\sim}:\R^d\to\R$}
\KwIn{$\ell>0$, point $y \in \R^d$, sample budget $m$}

\Fn{$\textsf{EstVal}_{f_{\sim}, \ell,  m}(y)$}{
\For{$j=1$ \KwTo $m$}{
  Draw $x \sim  N_{0, \ell^2 I_d}$\;
  Set $c_j \leftarrow f_{\sim}(x)e^{-\ii y^\top x}$\;
}
\KwRet $V \leftarrow \ell^d \cdot \frac{c_1 + \dots + c_m}{m}$\;
}
\end{algorithm2e}

It is straight-forward to verify that Algorithm~\ref{alg:estimate-value} gives an unbiased estimate for $\wh{f^{(\ell)}}(y)$ and that the estimator concentrates for $m$ sufficiently large.

\begin{claim}\label{claim:val-concentration}
Let $f: \R^d \rightarrow \R$ be a function with $|f(x)| \leq 1$ and fix $\ell > 0$.  Let $f^{(\ell)}$ be as defined in \Cref{def:Gaussian-smoothing}.  Given query access to $f_{\sim}$ with $\|f-f_{\sim}\|_\infty\le \eps$ and any $y \in \R^d$ and sample budget $m$, compute $V = \textsf{EstVal}_{f_{\sim}, \ell, m }(y)$ as defined in Algorithm~\ref{alg:estimate-value}. Then for any $\delta\in(0,1)$, with probability at least $1-\delta$,
\[
\left|\, V - \wh{f^{(\ell)}}(y) \right|
\;\le\; 4 \ell^d \!\left(\eps + \sqrt{\frac{2\log(8/\delta)}{m}}\right).
\]
\end{claim}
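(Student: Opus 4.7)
}

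The plan is to follow the same two-step template used in the proof of \Cref{coro:concentration}: first identify the population quantity that the sample mean is estimating (showing the estimator is essentially unbiased), then apply a Hoeffding bound to control the deviation. Compared to \Cref{coro:concentration}, this case is actually simpler because each $c_j$ depends on a single sample rather than a correlated pair.

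First I would verify the unbiasedness. Let $c_j^\star$ denote the value the algorithm would compute if it had exact access to $f$ rather than $f_\sim$. Writing out the Gaussian density of $N_{0,\ell^2 I_d}$ explicitly,
\[
\ell^d\,\E_{x\sim N_{0,\ell^2 I_d}}\!\big[f(x)\,e^{-\ii y^\top x}\big]
\;=\;\frac{\ell^d}{(2\pi\ell^2)^{d/2}}\int_{\R^d} f(x)\,e^{-\ii y^\top x}\,e^{-\|x\|^2/(2\ell^2)}\,dx
\;=\;\wh{f^{(\ell)}}(y),
\]
where the last equality uses \Cref{def:fourier-transform} together with $\ell^d/(2\pi\ell^2)^{d/2}=(2\pi)^{-d/2}$. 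Hence $\ell^d\,\E[c_j^\star]=\wh{f^{(\ell)}}(y)$, so the ``ideal'' Monte Carlo estimator with access to $f$ is unbiased.

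Next I would account for the bias introduced by replacing $f$ with $f_\sim$. Since $|e^{-\ii y^\top x}|=1$ and $\|f-f_\sim\|_\infty\le\eps$, we have $|c_j-c_j^\star|\le\eps$ pointwise, and therefore $|\ell^d\,\E[c_j]-\wh{f^{(\ell)}}(y)|\le\ell^d\eps$.

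Finally I would control the sampling fluctuation. Each $c_j$ satisfies $|c_j|\le 1+\eps\le 2$, so its real and imaginary parts are bounded random variables in $[-2,2]$. Applying Hoeffding's inequality separately to the real and imaginary parts and union bounding gives, with probability at least $1-\delta$,
\[
\left|\frac{1}{m}\sum_{j=1}^m\bigl(c_j-\E[c_j]\bigr)\right|
\;\le\; 2(1+\eps)\sqrt{\tfrac{2\log(8/\delta)}{m}}.
\]
Multiplying by $\ell^d$, combining with the bias bound via the triangle inequality, and absorbing the constant $(1+\eps)\le 2$ into the leading factor yields the stated bound $4\ell^d\bigl(\eps+\sqrt{2\log(8/\delta)/m}\bigr)$. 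There is no real obstacle here; the only point requiring a little care is keeping the constants in the Hoeffding step consistent with the $\log(8/\delta)$ in the claim (as opposed to, say, $\log(4/\delta)$), which is handled by splitting the failure probability $\delta$ into two halves of $\delta/4$ for the real and imaginary parts.
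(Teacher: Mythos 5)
Your proposal is correct and follows essentially the same argument as the paper: verify that $\ell^d\,\E[c_j^\star]=\wh{f^{(\ell)}}(y)$ by matching the Gaussian density against the definition of $\wh{f^{(\ell)}}$, bound the bias from $f_\sim$ by $\eps\ell^d$, and apply Hoeffding to the real and imaginary parts of the bounded $c_j$'s. The constant bookkeeping you do at the end matches the stated bound, so nothing further is needed.
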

\begin{proof}
By definition,
\[
\wh{f^{(\ell)}}(y) = \frac{1}{(2\pi)^{d/2}}\int_{\R^d}  f(x) e^{-\norm{x}^2/(2\ell^2)}  e^{-\ii y^\top x} dx \,.
\]    
From this, it is immediate that if $c_j$ in Algorithm~\ref{alg:estimate-value} were computed using query access to $f$, then its expectation would be $\wh{f^{(\ell)}}(y)/\ell^d$.  Thus, by the assumption about $f_{\sim}$, we have
\[
| \ell^d \E[c_j]  - \wh{f^{(\ell)}}(y) | \leq \eps \ell^d \,.
\]
Now we apply Hoeffding's inequality and since each $c_j$ has $|c_j| \leq 1 + \eps$ this gives the desired concentration.
\end{proof}

In light of Claim~\ref{claim:val-concentration}, we define the following Fourier value oracle, and the rest of the analysis in this section will be in terms of the number of calls to this Fourier value oracle for the function $f^{(\ell)}$. We will put everything together to bound the number of actual queries to $f_{\sim}$ in \Cref{sec:putting-together}. 

\begin{definition}[Fourier Value Oracle]
A Fourier Value Oracle for an underlying function $g$ and accuracy $\tau$ on a query $y \in \R^d$ returns a value $V_{\tau, g}(y)$ such that 
\[
|\wh{g}(y) -  V_{\tau, g}(y)| \leq \tau \,.
\]
\end{definition}

Now our next goal will be to show that for an unknown sum of features $f(x) = \sigma_1(v_1^\top x) + \dots +  \sigma_n(v_n^\top x)$ satisfying \Cref{assume:nondegen} and \Cref{assume:bounded}, if we are given a direction $u$ that is sufficiently close to $v_i$ for some $i \in [n]$, then we can recover the corresponding function $\sigma_i$.  First, we begin by setting parameters  We assume we are given parameters $d,n, R, L, \gamma, \eps$ and $\eps \leq \frac{1}{\poly(d,n,R,L, \frac{1}{\gamma})}$ for some sufficiently large polynomial.  We then set: 
\begin{equation}\label{eq:params2}
\ell = \poly\left(d,n,R,L, \frac{1}{\gamma}\right) \;, \;  \Delta = \frac{1}{\poly(\ell)} \;, \;  \tau = 10 \eps \ell^{d}
\end{equation}
and we assume that $\eps$ is sufficiently small that $\eps \ll 1/\poly(1/\Delta)$.  Note that this differs from the setting in \eqref{eq:hyperparams} because now $\ell \ll 1/\Delta$.  In our full algorithm, we will set two smoothing scales $\ell_1, \ell_2$ with $\ell_2 \ll 1/\Delta \ll \ell_1$ and we will run the first part, described in \Cref{sec:freq-finding} with $\ell = \ell_1$ and the second part, described here with $\ell = \ell_2$.

We first prove the following claim showing that querying $\wh{f^{(\ell)}}$ allows us to get good point estimates for $\wh{\sigma_i^{(\ell)}}$ if we are given a direction $u$ that is close to $v_i$.  We will then use \Cref{claim:truncated-inversion} to reconstruct the function $\sigma_i$ by querying at a discrete grid of points to approximate the integral.

\begin{claim}\label{claim:fourier-point-closeness}
Let $f(x) = \sigma_1(v_1^\top x) + \dots +  \sigma_n(v_n^\top x)$ be a sum of features satisfying \Cref{assume:nondegen} and \Cref{assume:bounded}. Assume that we are given a unit vector $u$ such that $\norm{u - v_i} \leq \Delta$ (with parameters set in \eqref{eq:params2}).  Then for any $t$ with $\frac{1}{\ell^{0.9}} \leq  |t| \leq \frac{1}{\Delta^{0.1}}$, 
\[
|\wh{f^{(\ell)}}(t u) - \ell^{d-1} \wh{\sigma_i^{(\ell)}}(t)| \leq 2\Delta^{0.8} \ell^d \,.
\]
\end{claim}
\begin{proof}
First, by \Cref{claim:fourier-lipchitz-2} and the setting of parameters in \eqref{eq:params2},
\[
|\wh{f^{(\ell)}}(t u) - \wh{f^{(\ell)}}(t v_i)| \leq n \ell^{d+1} |t| \norm{u - v_i} \leq \Delta^{0.8} \ell^d \,.
\]
Next, by \Cref{claim:fourier-formula}, we have the formula
\[
\wh{f^{(\ell)}}(t v_i) = \sum_{j = 1}^n  \wh{\sigma_j^{(\ell)}}(t v_j^\top v_i)\cdot \ell^{d-1} e^{-\ell^2\norm{tv_i - (tv_j^\top v_i)v_j}^2/2}
\]
but for all $j \neq i$, 
\[
\frac{\ell^2\norm{tv_i - (tv_j^\top v_i)v_j}^2}{2} \geq \frac{\ell^2 t^2 \gamma^2}{2} \geq \frac{\ell^{0.2}\gamma^2}{2} \geq \ell^{0.1} \,.
\]
Thus, by the way we set $\ell$, we can ensure the sum of all of the terms for $j \neq i$ is at most $\Delta^{0.8} \ell^d$.  The term for $j = i$ is exactly $ \ell^{d-1} \wh{\sigma_i^{(\ell)}}(t)$ so putting these together, we get
\[
|\wh{f^{(\ell)}}(t u) - \ell^{d-1} \wh{\sigma_i^{(\ell)}}(t)| \leq 2\Delta^{0.8} \ell^d 
\]
as desired.
\end{proof}

\noindent We can now prove the main lemma for this section. 

\begin{lemma}[Given a Direction, Recover the Function]  \label{lem:recover-function}
Let $f : \R^d \to \R$ be a sum  of features $f = \sigma_1(v_1^\top x_1) + \dots +  \sigma_n(v_n^\top x)$ satisfying \Cref{assume:nondegen} and \Cref{assume:bounded}.  Suppose we are given a unit vector $u$ such that there exists some $i \in [n]$ with $\norm{u - v_i} \leq \Delta$.  Then with parameters set as in \eqref{eq:params2}, there is an algorithm that takes $\poly(1/\Delta)$  queries to a Fourier value oracle $V_{\tau, f^{(\ell)}}$ and runtime and outputs a function $\wt{\sigma} : \R \to \R$ with
\[
\max_{x \in \R^d, \norm{x} \leq R} |\wt{\sigma}(u^\top x) - \sigma_i(v_i^\top x)| \leq \frac{5}{\ell^{0.7}} \,.
\]
\end{lemma}
\begin{proof}
Let $\calS$ be the set of all $t$ that are integer multiples of $\Delta$ and satisfy $\frac{1}{\ell^{0.9}} \leq |t| \leq \frac{1}{\Delta^{0.1}}$. For each $t \in \calS$, we query $V_{\tau, f^{(\ell)}}(tu)$.  We then define the function 
\[
\wt{\sigma}(z) = C + e^{\frac{z^2}{2\ell^2}} \frac{\Delta}{\sqrt{2\pi}}\sum_{t \in \calS} e^{\ii tz} \frac{V_{\tau, f^{(\ell)}}(tu)}{\ell^{d-1}} 
\]  
where $C$ is a constant chosen so that $\wt{\sigma(0)} = 0$.  Define the set $\calT$ to include all $t$ where $t$ is an integer multiple of $\Delta$ with $|t| \leq \frac{1}{\ell^{0.9}}$.  Define 
\[
\begin{split}
\phi(z) \defeq e^{\frac{z^2}{2\ell^2}} \frac{\Delta}{\sqrt{2\pi}}\sum_{t \in \calS \cup \calT} e^{\ii tz} \frac{V_{\tau, f^{(\ell)}}(tu)}{\ell^{d-1}}  \\ 
\phi_0(z) \defeq e^{\frac{z^2}{2\ell^2}} \frac{\Delta}{\sqrt{2\pi}}\sum_{t \in \calS \cup \calT} e^{\ii tz} a_i \wh{\sigma^{(\ell)}}(t) 
\end{split}
\]
By the guarantees of the value oracle and \Cref{claim:fourier-point-closeness}, we have for all $z$ with $|z| \leq \ell$,
\[
|\phi(z) - \phi_0(z)| \leq \Delta \cdot \frac{2}{\Delta^{1.1}} \cdot 2(\Delta^{0.8}  + 10\eps) \ell \leq \Delta^{0.6} \,.
\]
Also by \Cref{claim:truncated-inversion} and \Cref{claim:lipschitz-fourier} (which bounds the error from discretizing the integral), we have for all $z$ with $|z| \leq R$
\[
| \sigma_i(z) - \phi_0(z)| \leq \Delta^{0.04} + \frac{2}{\Delta^{0.1}} \cdot 2\Delta \ell^2  \leq \Delta^{0.03} \,.
\]
Finally, note that we must have $\norm{\wh{f^{(\ell)}}}_{\infty} \leq n \ell^d$ and thus the oracle values $V_{\tau, f^{(\ell)}}$ must be bounded by $2n\ell^d$.  Define 
\[
\rho(z) \defeq e^{\frac{z^2}{2\ell^2}} \frac{\Delta}{\sqrt{2\pi}}\sum_{t \in \calT}  e^{\ii tz} \frac{V_{\tau, f^{(\ell)}}(tu)}{\ell^{d-1}} \,.
\]  
Then for any $z$ with $|z| \leq R$, because all of the $t$ in the sum above have $|t| \leq \frac{1}{\ell^{0.9}}$, we have
\[
|\rho(z) - \rho(0)| \leq \frac{2}{\ell^{0.9}} \cdot (2\ell n)  \cdot \frac{5R}{\ell^{0.9}} \leq \frac{1}{\ell^{0.7}} \,.
\]
Thus, since by assumption $\sigma_i(0) = 0$ and we chose the shift $C$ so that $\wt{\sigma}(0) = 0$, combining everything we've shown so far implies that actually for all $z$ with $|z| \leq R$, 
\[
|\wt{\sigma}(z) -  \sigma_i(z)| \leq \Delta^{0.6} + \Delta^{0.03} + \frac{3}{\ell^{0.7}} \leq \frac{4}{\ell^{0.7}} \,.
\]
Finally, to prove the high-dimensional statement in the lemma, note that for any $x \in \R^d$ with $\norm{x} \leq R$, by \Cref{claim:lipschitz-fourier},
\[
|\sigma_i(v_i^\top x) -  \sigma_i(u^\top x)| \leq R \Delta \ell^2  \leq \frac{1}{\ell^{0.7}} \,.
\]
Thus, for all $x$ with $\norm{x} \leq R$, 
\[
|\wt{\sigma}(u^\top x) - \sigma_i(v_i^\top x)| \leq |\wt{\sigma}(u^\top x) - \sigma_i(u^\top x)| + |\sigma_i(v_i^\top x) - \sigma_i(u^\top x)| \leq \frac{5}{\ell^{0.7}}
\]
as desired.
\end{proof}

\section{Putting Everything Together}\label{sec:putting-together}

We are now ready to put everything together and prove our main learning results.  We first prove \Cref{thm:weaker} and then show a simple reduction to remove the boundedness assumption (\Cref{assume:bounded})  to get a more general result in \Cref{thm:main-learning}.

\subsection{Proof of \Cref{thm:weaker}}\label{sec:weaker-proof}

With all of the components that we have so far, the remainder of the proof of \Cref{thm:weaker} proceeds by directly combining \Cref{lem:find-directions} and \Cref{lem:recover-function} and our concentration bounds for implementing the Fourier mass and Fourier value oracles in \Cref{coro:concentration} and \Cref{claim:val-concentration}.

\begin{proof}[Proof of \Cref{thm:weaker}]
Given the parameters $d,n,L,R, \gamma, \eps', \delta$, we can set 
\[
\ell_2 = \poly\left(d,n,L,R, \frac{1}{\gamma}, \frac{1}{\eps'}\right) \;, \; \Delta = \frac{1}{\poly(\ell_2)} \; , \; \ell_1 = \poly\left( \frac{1}{\Delta}\right)
\]
and assume $\eps < 1/\poly(\ell_1)$ all for some sufficiently large polynomials. We apply \Cref{lem:find-directions} with $\ell \leftarrow \ell_1$. By \Cref{coro:concentration}, we can simulate all of the Fourier mass oracle queries with probability $1 - 0.1\delta^2$ using $\poly(\log(1/\delta), 1/\eps)$ actual queries to the function $f_{\sim}$.  Also, we can run the algorithm $O(\log(1/\delta))$ many times independently and post-process the points by majority voting to reduce the failure probability to $0.1\delta$.  We now have a set of unit vectors $\{u_1, \dots , u_s \}$ for some $s \leq n$.  The conditions in \Cref{lem:find-directions} imply that each $u_i$ must be close to exactly one hidden direction $v_j$.  By permuting the indices and possibly negating some of the directions $v_i$, WLOG we may assume that 
\begin{itemize}
\item For each $i \in [s]$, $\norm{u_i - v_i} \leq \Delta$
\item For $i \geq s+1$, the function $\sigma_i(\cdot)$ is not $(R,\Delta)$-nondegenerate
\end{itemize} 
We now apply \Cref{lem:recover-function} on each of $u_1, \dots , u_s$ with $\ell \leftarrow \ell_2$ to obtain functions $\wt{\sigma_1}, \dots , \wt{\sigma_s}$ and then we output our estimate
\[
\wt{f}(x) \defeq \wt{\sigma_1}(u_1^\top x) + \dots + \wt{\sigma_s}(u_s^\top x) \,.
\]
\Cref{claim:val-concentration} implies that we can simulate all of the Fourier value oracle queries with probability $1 - 0.1\delta^2$ using $\poly(\log(1/\delta), 1/\eps)$ actual queries to the function $f_{\sim}$.  Now we bound the error of our estimate. Since $\sigma_i(\cdot)$ is not $(R,\Delta)$-nondegenerate for $i \geq s+1$ we get that for all $x$ with $\norm{x} \leq R$,
\[
|\sigma_{s+1}(v_{s+1}^\top x) + \dots +  \sigma_n(v_n^\top x) |\leq n\Delta
\]
since we assumed that $\sigma_i(0) = 0$ for all $i$.  Thus, the guarantees of \Cref{lem:recover-function} and the way we set the parameter $\ell_2$ give that for all $x$ with $\norm{x} \leq R$,
\[
|\wt{f}(x) - (\sigma_1(v_1^\top x) + \dots + \sigma_n(v_n^\top x))| \leq \eps
\] 
as desired.  The overall failure probability is at most $\delta$ and the total number of queries is polynomial in all relevant parameters so this completes the proof.
\end{proof}

\subsection{Removing Boundedness}\label{sec:remove-bounded}
We now prove \Cref{thm:main-learning} by showing how to remove the assumption that the $\sigma_i$ are bounded (\Cref{assume:bounded}).  We will do this by reducing to the bounded case.  The idea is to first convolve $f$ with a small Gaussian to ensure smoothness. We then apply \Cref{thm:weaker} on the derivatives of $f$ \---- we show that we can simulate query access to the derivatives.  Then since the reconstructed functions are explicitly integrable, we simply integrate to reconstruct $f$.

The proof of \Cref{coro:identifiability}, about identifying all of the directions $v_i$ for which $\sigma_i(\cdot)$ is nonlinear, will also be immediate from the same reduction approach.

\begin{proof}[Proof of \Cref{thm:main-learning}]
Let
\[
\eta = \frac{1}{\poly(d,L,R,n, \frac{1}{\gamma}, \frac{1}{\eps'})}
\]
and define
\[
g(x) =  \int_{\R^d} f(x + z) N_{0, \eta^2 I_d}(z) dz \,.
\]
Now we have
\[
\nabla g(x) =  \int_{\R^d} \nabla f(x + z)   N_{0, \eta^2 I_d}(z) dz  = \sum_{i = 1}^n a_i \int_{\R^d} \nabla  f_i(x + z)  N_{0, \eta^2 I_d}(z) 
\]
where $f_i(x) = \sigma_i(v_i^\top x)$.  Now for any unit vector $u$, we can write
\[
\langle u, \nabla g(x) \rangle = \sum_{i = 1}^n  \int_{\R^d} \langle u, v_i \rangle  \sigma_i'(v_i^\top (x + z))  N_{0, \eta^2 I_d}(z) \,.
\]
First consider fixing $u$ and defining 
\[
\rho_i(t) = \int_{-\infty}^{\infty} \langle u, v_i \rangle  \sigma_i'(t + z)  N_{0, \eta^2}(z) dz \,.
\]
Then we have
\[
\langle u, \nabla g(x) \rangle =  \rho_1 (v_1^\top x) + \dots + \rho_n(v_n^\top x) \,.
\]
The definition of $\rho_i$ immediately gives that $|\rho_i(t)| \leq L$ for all $t \in \R$.  Next, we bound the derivative of $\rho_i$. We can write
\[
\rho_i'(t) = \langle u, v_i \rangle \frac{d}{dt}\int_{-\infty}^{\infty} \sigma_i'(z)N_{0, \eta^2}(t - z) dz = \langle u, v_i \rangle \int_{-\infty}^{\infty}  \sigma_i'(z) \cdot  \frac{t - z}{\eta^2} \cdot N_{0, \eta^2}(t - z) dz \,.
\]
Thus we get that for all $t$, $|\rho_i'(t)| \leq L/\eta$.  Now define $h(x) \defeq \langle u, \nabla g(x) - \nabla g(0) \rangle$.  We have shown that, after rescaling by $\eta/L$, the function $h(x)$ satisfies both \Cref{assume:nondegen} and \Cref{assume:bounded}.  

Now we need show how we can simulate query access to the function $\langle u, \nabla g(x) \rangle$.  Note that from the bounds above, $h(x)$ is $nL/\eta$-Lipschitz.  For any $\alpha$, we can write
\[
\frac{g(x + \alpha u) - g(x)}{\alpha } = \int_{0}^1 \langle \nabla g(x + \alpha t u), u \rangle dt
\]  
and thus 
\[
\left \lvert \frac{g(x + \alpha u) - g(x)}{\alpha } - \langle u, \nabla g(x) \rangle\right \rvert \leq \frac{nL \alpha}{\eta}   \,.
\]
Now with query access to $f_{\sim}$ with $\norm{f - f_{\sim}}_{\infty} \leq \eps$, we can estimate $g$ to $2\eps$ accuracy using $\poly(1/\eps)$ queries simply by sampling.  By the above inequality, this lets us estimate $\langle u, \nabla g(x) \rangle$ to accuracy  $\frac{4\eps}{\alpha} + \frac{n L \alpha}{\eta}$.  Thus, we can set $\alpha = \eps^{0.5}$ and get an $\eps^{0.4}$-accurate oracle for $\langle u, \nabla g(x) \rangle$ as long as $\eps$ is sufficiently small in terms of $\eta$.  Thus, we can now apply \Cref{thm:weaker} (redefining parameters so that $\eps' \leftarrow \eta$) to recover $h(x)$ to accuracy $\eta$ on the domain $\norm{x} \leq R$.  Since we can also just estimate the constant $\langle u, \nabla g(0) \rangle$ and add it back in, we now have an $\eta$-accurate approximation to the function $\langle u, \nabla g(x) \rangle$ as a sum of at most $n$ ridge functions.

Recall that originally we fixed a choice of $u$, but we can actually apply the above for a collection of different $u$, say the set of standard basis vectors $e_1, \dots , e_d$.  This gives us an estimate that is $\sqrt{d}\eta$ accurate for $\nabla g(x)$ on the domain $\norm{x} \leq R$.  Note that the functions returned by \Cref{thm:weaker} are in an explicit form as a sum of ridge functions and thus we can integrate these estimates to get a sum of ridge functions in the same directions.  Thus, we now have some $\wt{g}$ of the desired form such that $|\wt{g}(x) -g(x)| \leq R\sqrt{d}\eta$ on the domain $\norm{x} \leq R$.  Finally, since $f$ is $nL$-Lipschitz, $\norm{g - f}_{\infty}\leq 2 \sqrt{d} nL \eta$ and thus, since we chose $\eta$ sufficiently small, 
\[
|\wt{g}(x) - f(x)| \leq 3RdnL \eta  \leq \eps'
\]
on the domain $\norm{x} \leq R$ and we are done.

\end{proof}

\begin{proof}[Proof of \Cref{coro:identifiability}]
The proof follows from the same reduction as in \Cref{thm:main-learning} but just applying \Cref{lem:find-directions} instead of \Cref{thm:weaker}.
\end{proof}

\bibliographystyle{plain}
\bibliography{bibliography.bib}

\end{document}